\documentclass{article}
\usepackage[utf8]{inputenc}

\usepackage{geometry}
\newgeometry{vmargin={1in}, hmargin={1.25in,1.25in}} 
% Recommended, but optional, packages for figures and better typesetting:
\usepackage{microtype,soul}
\usepackage{graphicx}
\usepackage{subfigure}
\usepackage{booktabs} % for professional tables
\usepackage{amsmath,amssymb,amsfonts,graphicx,nicefrac,mathtools,amsthm}
\usepackage{fancyhdr}
\usepackage{bbm}
\usepackage{tikz}
\usepackage{enumerate}
\usepackage{booktabs}
\usepackage{multirow}
\usepackage{xcolor}
\usepackage{enumitem}
\usepackage{makecell}
% \usepackage{subcaption}
% \usepackage{icml2023}

% hyperref makes hyperlinks in the resulting PDF.
% If your build breaks (sometimes temporarily if a hyperlink spans a page)
% please comment out the following usepackage line and replace
% \usepackage{icml2021} with \usepackage[nohyperref]{icml2021} above.
\usepackage{hyperref}

% Attempt to make hyperref and algorithmic work together better:

 % Expectation symbol

\def\R{\mathbb{R}}
\def\N{\mathbb{N}}
\def\G{\Gamma}

\def\F{\mathcal{F}}
\def\A{\mathcal{A}}

\def\I{\text{IG}}

\def\IG{\text{IG}}

\def\g{\gamma}

\def\pxi{\frac{\partial F}{\partial x_i}}
\def\pxj{\frac{\partial F}{\partial x_j}}

 % Probability symbol

 % Covariance symbol

\def\b{\beta}

\def\a{\alpha}

\def\x{\bar{x}}

\theoremstyle{definition}

\newtheorem{theorem}{Theorem}

\newtheorem{lemma}{Lemma}
\newtheorem{corollary}{Corollary}

\newtheorem{definition}{Definition}

% Use the following line for the initial blind version submitted for review:

% If accepted, instead use the following line for the camera-ready submission:
%\usepackage[accepted]{icml2021}

% The \icmltitle you define below is probably too long as a header.
% Therefore, a short form for the running title is supplied here:
% \icmltitlerunning{Review and Extensions of Integrated Gradients\hfill\thepage}
% \icmltitlerunning{[title]\hfill\thepage}
% \title{A Unifying Framework to the Analysis of Interaction Methods Using Synergy Functions}

\title{Four Axiomatic Characterizations of the Integrated Gradients Attribution Method
}
%A Unifying Framework for Game-Theoretic Interaction Methods Applied to Machine Learning Explainability
%A Unifying Game-Theoretic Framework for Machine-Learning Explainability  Interaction Methods  
%A Unifying Framework for Game Theory Inspired Machine-Learning Interaction Methods  
%Distributing Synergy Functions: Unifying Game-Theoretic Interaction Methods for Machine-Learning Explainability
%  Unifying Game-Theoretic Interaction Methods for improved machine learning explainability
% A Unifying Game-Theoretic Interaction Methods for Machine-Learning Explainability

\author{Daniel Lundstrom\thanks{lundstro@usc.edu, University of Southern California} 
\and Meisam Razaviyayn\thanks{razaviya@usc.edu, University of Southern California}}

\begin{document}
\maketitle

\begin{abstract}
Deep neural networks have produced significant progress among machine learning models in terms of accuracy and functionality, but their inner workings are still largely unknown. Attribution  methods seek to shine  a light on these ``black box" models by indicating how much each input contributed to a model's outputs. The Integrated Gradients (IG) method is a state of the art baseline attribution method in the axiomatic vein, meaning it is designed to conform to particular principles of attributions. We present four axiomatic characterizations of IG, establishing IG as the unique method to satisfy different sets of axioms among a class of attribution methods.\footnote{This work was supported in part with funding from the USC-Meta Center for Research and Education in AI \& Learning (REAL@USC center).}
\end{abstract}
% Deep learning has revolutionized 
% But the models are "black-box"
% explaining  would improve transparency and trust in  AI-powered decision making
% necessary for understanding other practical needs such as robustness and fairness

%%%%%%%%%%%%%%%%%%%%%%%%%%%%%%
%%%%%%  Introduction  %%%%%%%%
%%%%%%%%%%%%%%%%%%%%%%%%%%%%%%

\section{Introduction}
Deep neural networks have revolutionized various fields of machine learning over the past decade, from computer vision to natural language processing. These models are often left unexplained, causing practitioners difficulties when troubleshooting training inference issues or poor performance. This can lead to a lack of user trust in the model and an inability to understand what features are important to a model's function. Various regulations have been proposed that would require that ML models be transparent in certain scenarios \cite{WhitehouseAIBill}, \cite{EUAIBill}, \cite{UKAIBill}.

\vspace{2mm}
Attribution methods, sometimes called salience maps, are a response to this issue, purporting to explain the working of a model by indicating which inputs are important to a model's output. One group of such methods, game-theoretic attribution methods, go about producing attributions in a principled way by stipulating axioms, or guiding principals, and proposing methods that conform to those principles. When axioms are posited, the possible forms of an attribution become constrained, possibly to a single, unique method.

\vspace{2mm}
This is the case with the Integrated Gradients method. Initially introduced and analyzed in Axiomatic Attributions for Neural Networks~\cite{sundararajan2017axiomatic}, counterexamples to its uniqueness claims have since been provided by \cite{lundstrom2022rigorous} and \cite{lerma2021symmetry}.

\vspace{2mm}
While the original uniqueness claim about IG is problematic, in this work, we show that IG uniqueness claims can be established rigorously via different axioms. We start by introducing different axioms common to game-theoretic attribution methods, namely, implementation invariance, linearity, dummy, and completeness. Then, using axioms, we establish the following characterizations.

\begin{enumerate}
    \item Path methods can be characterized among attribution methods by the linearity, completeness, dummy, and non-decreasing positivity axioms.
    \item IG can be characterized among monotone path methods by the symmetry-preserving and affine scale invariance axioms.
    \item IG can be characterized among attribution methods by the linearity, affine scale invariance, completeness, non-decreasing positivity, and proportionality axioms.
    \item IG can be characterized among attribution methods by the linearity, completeness, dummy, and symmetric-monotonicity axioms.
    \item IG can be characterized among attribution methods by its action on monomials and the continuity of Taylor approximations for analytic functions axiom.
\end{enumerate}

\noindent Furthermore, we show that IG attributions to neural networks with ReLU and max functions coincide with IG attributions to softplus approximations to such models. This establishes a sort of continuity of IG among softplus approximations.

\section{Related Works}
Many solutions have been proposed to help explain black box neural networks. Using the taxonomy of \cite{linardatos2020explainable}, we can divide types of explainability methods into various overlapping categories. One approach is to make models intrinsically explainable \cite{letham2015interpretable}, while another method is explain models after the fact, called post-hoc explanations \cite{letham2015interpretable}. Some methods are designed to be used particular data type such as images \cite{smilkov2017smoothgrad} or language \cite{ventura2021explaining}. Some methods are designed for use on specific types of models \cite{vig2019multiscale}, while others are model agnostic \cite{ribeiro2016should}. Some methods explain the models workings over an entire data set \cite{ibrahim2019global}, while others explain the model's actions with regard to a particular input \cite{zeiler2014visualizing}.

\vspace{2mm}
Attributions methods are post-hoc methods designed to explain a model's action on a specific input. A particular approach to attributions is to apply methods from game-theoretic cost-sharing. These methods borrow from a developed set of literature, which provides them a strong theoretical background and established results. For example, the SHAP method \cite{lundberg2017unified} is an import of the Shapley value \cite{shapley1971assignment} into the ML attributions context. Likewise, the Integrated Gradient \cite{sundararajan2017axiomatic} is an import of the Aumann-Shapley cost-sharing method \cite{aumann1974values} into the ML attributions context.

\vspace{2mm}
Various works have analyzed the Aumann-Shapley method, characterizing it as the unique method to satisfy a set of desirable properties, or axioms. \cite{billera1982allocation} gave a characterization based on the idea of proportionality, while \cite{mirman1982demand} and \cite{samet1982determination} gave further characterization in a similar vein. \cite{mclean2004potential} showed a characterization based on the ideas of potential and consistency, while \cite{calvo2000value} characterized the Aumann-Shapley method based on the idea of balanced contributions. \cite{sprumont2005discrete} developed constraints around the merging or splitting agents to provide a characterization. \cite{young1985producer} provided a characterization using the principle of symmetric monotonicity, \cite{monderer1988values} developed another characterization absed on potential, while \cite{albizuri2014monotonicity} developed a characterization based on both merging/splitting and monotonicity.

\vspace{2mm}
% \daniel{do you want 2mm spaces everywhere like this?} 
The Integrated Gradients was first introduced in \cite{sundararajan2017axiomatic} and a characterization was provided for it as well. This claim did not cite any characterizations of the Aumann Shapley, but used the idea of preserving symmetry. However, \cite{lerma2021symmetry} and \cite{lundstrom2022rigorous} critiqued various aspects of the uniqueness claim with counterexamples and issues with the proof methods. The issues cited in \cite{lundstrom2022rigorous}'s criticism is that the ML context is significantly different than the cost-sharing context, causing unforeseen difficulties in applying results from one to another. Another characterization of IG was provide in \cite{sundararajan2020many}, this time based on a the cost-sharing result relying on the principle of proportionality. This proof was also criticized by \cite{lundstrom2022rigorous}, for the same reasons.

\section{Preliminaries}
In this section, we cover preliminaries needed for our work.

\subsection{Baseline Attribution Notations}
We begin by establishing preliminary notions. For $a$, $b\in \R^n$, let $[a,b]$ denote the hyper-rectangle with opposite vertices $a$ and $b$. Here $[a,b]$ represents the domain of input of a ML model, such as a colored image. We denote the set of ML models of interest $\F$, with $F\in \F$ being some function $F:[a,b]\rightarrow \R$, e.g. a deep learning model. Here we only consider one output of a model, so that if a model reports a probability vector of scores from a softmax layer, for instance, we only consider one entry of the probability vector.

\vspace{2mm}
Throughout the paper $x$ represents a general function input, $\x$  represents a particular input that is part of an attribution, and $x'$ denotes a reference baseline. A \textit{baseline attribution method} (BAM) explains a model by assigning scores to the components of an input indicating its contribution to the output $F(\x)$. We define a BAM as:

\begin{definition}[Baseline Attribution Method]
Given an input $\x\in [a,b]$ and baseline $x'\in [a,b]$, $F\in \F(a,b)$, a baseline attribution method is any function of the form $A:D_A\rightarrow\R^n$, where $D\subseteq [a,b]\times[a,b]\times \F$.
\end{definition}
A BAM reports a vector, so that $A_i(\x,x',F)$ reports the contribution of the $i^\text{th}$ component of $\x$ to the output $F(\x)$, given the reference baseline input $x'$. BAMs are a type of attribution with a baseline input used for comparison to the input $\x$, usually representing an absence of features. Often a baseline $x'$ is implicit for the model $F$, and we may drop writing $x'$ if it is unnecessary. It is not guaranteed that a BAM is defined for any input, as we will see in section~\ref{section:IG}. We denote the domain where an attribution is defined by $D_A$. 

\vspace{2mm}
There are two particular BAM's defined on different function classes we will discuss. Define $\F^1(a,b)$ to be the set of real analytic functions on $[a,b]$, and define $\A^1$ to be the set of BAMs defined on $[a,b]\times[a,b]\times \F^1(a,b)$. We may write $\F^1$ if $a$, $b$ is apparent.

\vspace{2mm}
The class of real analytic functions is well understood, but does not include many practical deep NNs, such as those which use the ReLU and max functions. To address these networks, define $\F^2(a,b)$, or $\F^2$ if $a$, $b$ is apparent, to be the set of feed-forward neural networks with a finite number of nodes on $[a,b]$ composed of real-analytic layers and ReLU layers. This includes fully connected, skip, residual, max, and softmax layers, as well as activation functions like sigmoid, mish, swish, softplus, and leaky ReLU.

\vspace{2mm}
Formally, let $n_0,...,n_m,m\in \N$, and for $1\leq k \leq m$, let $F^k:\R^{n_{k-1}}\rightarrow \R^{n_k}$ denote a real-analytic function. Let $S^k:\R^{n_k}\rightarrow \R^{n_k}$ to be any function of the form $S^k(x) = (f^k_1(x_1),...,f^k_{n_k}(x_n))$, where $f^k_i(x_k)$ is the identity mapping or the ReLU function. That is, $S^k$ performs one of either a pass through or a ReLU on each component, and could perform different operations on different components. Each function in $\F^2$ takes the form:

$$F(x) = S^m \circ F^m \circ S^{m-1} \circ F^{m-1} \circ ... \circ S^2 \circ F^2 \circ S^1 \circ F^1 (x),$$
where $\circ$ denotes function composition. Note that a multi-input max function can be formulated by a series of two-input max functions, and $\max(x,y) = \text{ReLU}(x-y) + y$. Thus neural networks with the max function can be reformulated using only the ReLU function, and $\F^2$ includes neural networks with the max function. Define $\A^2(D)$ (or $\A^2$) to be the set of BAMs defined on $D\subseteq [a,b]\times[a,b]\times (\F^1\cup \F^2)$.

\subsection{Axiomatic Approach}

The previous definition of a BAM is very broad, and includes many BAMs that do not track the importance of inputs. The axiomatic approach to attribution methods is to stipulate properties that can be imposed on $A$, limiting its structure and ensuring it accurately tracks feature contribution. It is even possible that a set of axioms constrains attribution methods to the degree that only one method satisfies all of them. In this case, the set of axioms would characterize the attribution method. We move to review axioms common to the literature.

\vspace{2mm}
Our first axiom, \textit{implementation invariance} \cite{sundararajan2017axiomatic}, can be stated as follows:
\begin{enumerate}[start]
    \item \textit{Implementation Invariance}: $A$ is not a function of model implementation, but solely a function of the mathematical mapping of the model's domain to the range.
\end{enumerate}
This axiom stipulates that an attribution method be independent of the model's implementation. Otherwise, the values of the attribution may carry information about implementation aspects such as architecture. This axiom requires that attributions ignore all aspects of specific implementation. Many methods, such as Smoothgrad \cite{smilkov2017smoothgrad} and SHAP \cite{lundberg2017unified}, satisfy implementation invarinace while \cite{sundararajan2017axiomatic} showed that DeepLIFT \cite{shrikumar2017learning} and Layer-Wise Relevance Propogation \cite{binder2016layer} do not satisfy it.

\vspace{2mm}
The next axiom, \textit{linearity} \cite{sundararajan2017axiomatic} \cite{sundararajan2020many} \cite{janizek2021explaining}, is given as,
\begin{enumerate}[resume*]\label{axiom:linearity}
    \item \textit{Linearity}: If $(\x,x',F)$, $(\x,x',G) \in D_A$, $\a,\b\in \R$, then $(\x,x',\a F+\b G)\in D_A$ and $A(\x,x',\a F+\b G) = \a A(\x,x',F)+\b A(\x,x',G)$.
\end{enumerate}
The linearity ensures that if $F$ is a linear combination of other models, a weighted average of model outputs for example, then the attributions of $F$ equals the average of the attributions to the sub-models. This imposes structure to the attributions outputs, so that if a model's outputs are scaled to give outputs twice as large for example, then the attributions are scaled as well.

\vspace{2mm}
We say that a function $F$ does not vary in an input $x_i$ if for every $x$ in the domain of $F$, $G(t):= F(x_1,...,x_{i-1},t,x_{i+1},...,x_m)$ is a constant function. We denote that $F$ does not vary in $x_i$ by writing $\partial_i F \equiv 0$. With is definition we may state another axiom, \textit{dummy}\footnote{The dummy axiom here is called Sensitivity(b) in \cite{sundararajan2017axiomatic}.},
\begin{enumerate}[resume*]\label{axiom:dummy}
    \item \textit{Dummy}: If $(\x,x',F)\in D_A$ and $\partial_i F \equiv 0$, then $A_i(\x,x',F) = 0$.
\end{enumerate}
Dummy ensures that whenever an input has no effect on the function, the attribution score is zero.

\vspace{2mm}
Another axiom, \textit{completeness} \cite{sundararajan2017axiomatic} \cite{sundararajan2020many} \cite{tsai2022faith}, is given as,
\begin{enumerate}[resume*]\label{axiom:completeness}
    \item \textit{Completeness}: If $(\x,x',F)\in D_A$, then $\sum_{i=1}^{n} A_i(\x,x',F) = F(\x)-F(x')$.
\end{enumerate}
Completeness grounds the meaning of the magnitude and sign of attributions. The magnitude of $A_i(\x,x',F)$ indicates that $\x_i$ contributed that quantity to the change in function value from $F(x')$ to $F(\x)$. The sign of $A_i(\x,x',F)$ indicates whether $\x_i$ contributed to function increase or function decrease. Thus the attributions to each input give a complete account of function change, $F(\x)-F(x')$.

\subsection{The Integrated Gradients}\label{section:IG}

There is a particular form of baseline attribution method which satisfied axioms~1-4, called a path method. Define a \text{path function} as follows:

\begin{definition}[Path Function]
A function $\g(\x,x',t):[a,b]\times[a,b]\times[0,1]\rightarrow [a,b]$ is a path function if, for fixed $\x,x'$, $\g(t):=\g(\x,x',t)$ is a continuous, piecewise smooth curve from $x'$ to $\x$.
% \vspace{-0.3cm}
\end{definition}

We may drop both $\x$, $x'$ when they are fixed, and write $\g(t)$. If we further suppose that $\pxi(\g(t))$ exists almost everywhere\footnote{$\pxi(\g(t))$ exists almost everywhere iff the Lebesgue measure of $\{t\in[0,1]:\pxi(\g(t))\text{ does not exist.}\}$ is $0$.}, then the \textit{path method} associated with $\g$ can be defined as:

\begin{definition}[Path Method]\label{def:path method} Given the path function~$\g(\cdot,\cdot,\cdot)$, the corresponding path method is defined as
\begin{equation}\label{eq:PathMethod}
    A^\g(\x,x',F) = \int_0^1 \frac{\partial F}{\partial x_i} (\g(\x,x',t)) \times \frac{\partial \g_i}{\partial t} (\x,x',t) dt,
\end{equation}
where $\g_i$ denotes the $i$-th entry of $\g$.
\end{definition}

Path methods are well defined when $\nabla F$ exists and is continuous on $[a,b]$, however, this is not necessarily the case for common ML models, such as neural networks that use ReLU and max functions. For example, if $\x=(1,1)$, $x'=(0,0)$, we use the straight line path $\g(t) = t(1,1)$. If $F(x)=\max(x_1,x_2)$, then $A^\g(\x,x',F)$ does not exist because the partial derivatives are undefined on any point on the path $\g$.

\vspace{2mm}
The \textit{Integrated Gradients method} \cite{sundararajan2017axiomatic} is the path method defined by the straight path from $x'$ to $\x$, given as $\g(\x,x',t) = x' + t(\x-x')$, and takes the form:
\begin{definition}[Integrated Gradients Method]%\label{eq:IG}
Given a function $F$ and baseline $x'$, the Integrated Gradients attribution of the $i$-th component of $\x$ is defined as
\begin{equation} \label{eq:IG} 
    \text{IG}_i(\x,x',F) = (\x_i-x_i')\int_0^1 \frac{\partial F}{\partial x_i} (x' + t(\x-x')) dt
\end{equation}
\end{definition}
The Integrated Gradients method is the application of the Aumann Shapley cost-sharing method applied to the ML attributions context~\cite{aumann1974values}. We define $D_\IG\subseteq [a,b]\times [a,b]\times (\F^1\cup \F^2)$ to be the domain where IG is defined.

\vspace{2mm}
The Integrated Gradient method satisfies the four axioms stated above. We give a brief explanation for how it satisfies each. Assume here that $\g(t)$ is the straight line IG path.

\begin{itemize}
    \item Implementation Invariance: IG only depends on $\nabla F$, which is independent on the implementation of $F$.
    \item Linearity: For any index $i$ we have:
        \begin{equation*}
            \begin{split}
                \IG_i(\x,x',aF+bG) &= (\x_i-x_i')\int_0^1 \frac{\partial(aF+bG)}{\partial x_i}(\g(t))dt\\
                &= a(\x_i-x_i')\int_0^1\frac{\partial F}{\partial x_i}(\g(t))dt + b(\x_i-x_i')\int_0^1\frac{\partial G}{\partial x_i}(\g(t))dt\\
                &= a\IG_i(\x,x',G) + b\IG_i(\x,x',G)
            \end{split}
        \end{equation*}
    \item Dummy: If $\partial_i F\equiv 0$ then $\IG_i(\x,x',F)$ integrates the zero function, and equals 0.
    \item Completeness: Letting `` $\cdot$ " denote the inner product, we employ the fundamental theorem of line integrals to gain:
    \begin{equation*}
            \begin{split}
                \sum_{i=1}^n\IG_i(\x,x',F) &= \sum_{i=1}^n(\x_i-x_i')\int_0^1 \frac{\partial F}{\partial x_i}(\g(t))dt\\
                &= \int_0^1 \nabla F(\g(t)) \cdot \g'(t)dt\\
                &= F(\x)-F(x')
            \end{split}
        \end{equation*}
\end{itemize}

\section{Path Method Characterization with NDP and Symmetry-Preserving}
The first attempt at characterizing the Integrated Gradients was presented in \cite{sundararajan2017axiomatic}. The general flow of the argument was 1) path methods uniquely satisfy a set of axioms, and 2) IG is the unique path method that satisfies certain extra properties. Later, \cite{lundstrom2022rigorous} critiqued the first part of the argument with provided counterexamples, while \cite{lerma2021symmetry} critiques and provided counterexamples to the second argument and adjusted some results. Here we present the current understanding of the argument and establish a characterization of IG among path methods by adding affine scale invariance, an axiom already seen in the literature.

\subsection{Characterizing Ensembles of Monotone Path Methods}
A path function $\g(t)$ from $x'$ to $\x$ is called monotone if each component is monotone in $t$. We denote the set of monotone paths from $x'$ to $\x$ by $\G^\lambda(\x,x')$. We say that $F$ is non-decreasing from $x'$ to $\x$ if $F(\g(t))$ is monotone in $t$ for each $\g\in\G^m(\x,x')$. We then define the axiom \textit{Non-Decreasing Positivity} (NDP) as follows:

\begin{enumerate}[resume*]\label{axiom:NDP}
    \item\textit{Non-Decreasing Positivity}: If $(\x,x',F)\in D_A$ and $F$ is non-decreasing from $x'$ to $\x$ then $A(\x,x',F)\geq0$.
\end{enumerate}

\noindent 
If $F$ is non-decreasing from $x'$ to $\x$, each input of $F$ does not cause a decrease if it moves closer to the input from the baseline. Thus, intuitively no component of $\x$ contributed to $F$ decreasing by being at its input value rather than the baseline value. Because no input contributed to $F$ decreasing in value, NDP asserts that those attributions should not be negative.

\vspace{2mm}
With NDP, we can characterize a sort of averaging of monotone path methods among all baseline attribution methods.\footnote{For an account of the differences between theorem~\ref{Theorem: ensembles of path methods in F^1} here and proposition~2 in \cite{sundararajan2017axiomatic}, see \cite{lundstrom2022rigorous}.}

\begin{theorem}\label{Theorem: ensembles of path methods in F^1}\cite[Theorem 2]{lundstrom2022rigorous}
Suppose $A\in\A^1$. The following are equivalent:

% \vspace{-0.4cm}

\begin{enumerate}
    \item[i.]  $A$ satisfies completeness, linearity, dummy, and NDP.
% \vspace{-0.3cm}
    \item[ii.] There exists a family of probability measures $\mu^{\cdot,\cdot}$ indexed on $(\x,x')\in[a,b]\times[a,b]$, where $\mu^{\x,x'}$ is a measure on $\G^m(\x,x')$, such that $$A(\x,x',F) = \int_{\G^m(\x,x')} A^\g(\x,x',F) d\mu^{\x,x'}(\g)$$
\end{enumerate}
\end{theorem}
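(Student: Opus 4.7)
The plan is to prove the two implications separately, with $(ii) \Rightarrow (i)$ being a routine verification and $(i) \Rightarrow (ii)$ carrying the real content. For the easy direction, I would push each axiom through the integral against $\mu^{\x, x'}$. Linearity in $F$, dummy, and completeness already hold pointwise for each path method $A^\g$: the first two follow from the linear structure of the integrand $\pxi(\g(t))\cdot \frac{\partial \g_i}{\partial t}(t)$, and completeness for $A^\g$ follows from the fundamental theorem of line integrals exactly as illustrated for IG in the preliminaries. The only axiom that genuinely uses monotonicity of $\g$ is NDP: by specializing to piecewise axis-aligned monotone paths one sees that $F$ non-decreasing from $x'$ to $\x$ forces $\pxi$ to have the same sign as $\x_i - x_i'$ on the hyper-rectangle $[x', \x]$, so $\pxi(\g(t))$ and $\frac{\partial \g_i}{\partial t}(t)$ agree in sign almost everywhere along any monotone $\g$, and hence $A^\g_i \geq 0$. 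Non-negativity is then preserved by averaging.

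For the converse, I would fix $(\x, x')$ and view $A(\x, x', \cdot)$ as an $\R^n$-valued linear functional on $\F^1(a,b)$. Linearity supplies the linear structure, dummy restricts the dependence on each coordinate to functions in which that coordinate actually appears, and completeness pins the coordinate sum to $F(\x) - F(x')$. Combined with NDP, this makes each $A_i(\x, x', \cdot)$ behave as a positive linear functional on the cone of functions that are non-decreasing from $x'$ to $\x$, which is the natural input to a representation theorem producing a probability measure $\mu^{\x, x'}$ on $\G^m(\x, x')$ whose barycenter, in the space of path-method-valued functionals, is $A(\x, x', \cdot)$.

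The main obstacle is the actual construction of $\mu^{\x, x'}$ on the infinite-dimensional path space and the verification that a single measure represents $A$ uniformly across all of $\F^1$. A natural route, exploiting real analyticity, is to reduce the desired identity to monomials $x^\a$ via Taylor expansion on the compact rectangle $[a,b]$, use the explicit closed form of $A^\g_i(\x, x', x^\a)$ to extract moment-type constraints on $\mu^{\x, x'}$, solve the resulting generalized moment problem under the positivity guaranteed by NDP, and then extend back to all of $\F^1$ by uniform approximation. Finally, I would verify that $\mu^{\x, x'}$ depends measurably on $(\x, x')$ so that the integral representation holds globally on $D_A$ and not just pointwise in $(\x, x')$.
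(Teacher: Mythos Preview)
The paper does not supply its own proof of this theorem; it is quoted from \cite[Theorem~2]{lundstrom2022rigorous} and used as a black-box input for the results that follow. There is therefore no in-paper argument to compare your proposal against.

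On the substance of what you wrote: the direction $(ii)\Rightarrow(i)$ is fine as sketched. For $(i)\Rightarrow(ii)$ you correctly isolate the structural roles of the axioms, but the passage from ``positive linear functional on the non-decreasing cone'' to ``probability measure on $\G^m(\x,x')$'' is where the actual difficulty lives, and your moment-problem outline does not yet close it. Two concrete gaps. First, knowing $A$ on monomials $[x-x']^m$ yields a countable family of numerical constraints, but $\G^m(\x,x')$ is infinite-dimensional and you have not argued that those constraints determine a measure on paths rather than merely a continuous linear functional on some larger dual; a generalized moment problem on a path space needs a specific compactness or extreme-point structure that you have not identified. Second, you have not explained how NDP forces the representing measure to be supported on \emph{monotone} paths specifically---that is the geometric heart of the statement, and it does not fall out of positivity on monomials alone. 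The cost-sharing literature underlying this result (e.g.\ \cite{friedman1999three}) typically handles the representation by reparametrizing monotone paths and invoking a Riesz- or Choquet-type theorem on that parametrization, not by solving a raw moment problem; if you pursue your route you will need to supply an analogous extreme-point analysis of the relevant cone.
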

% \vspace{-0.3cm}

\noindent 
Theorem~\ref{Theorem: ensembles of path methods in F^1} states that if $A\in\A^1$ is constrained according to the four axioms, then $A$ is an expected value of path methods with monotone paths. We call this expected value of path methods an \textit{ensemble of monotone path methods}, or more generally an ensemble of path methods if the expectation is not constrained to monotone paths. To present results for $\F^2$, we first give a result on the topology of NN models in $\F^2$:

\begin{lemma}\label{RS Lemma: path methods topology}\cite[Lemma 2]{lundstrom2022rigorous}
Suppose $F\in\F^2$. Then $[\x,x']$ can be partitioned into a nonempty region $U$ and its boundary $\partial U$, where $F$ is real-analytic on $U$, $U$ is open with respect to the (usual) topology of the dimension of $[\x,x']$, and $\partial U$ is measure $0$.
\end{lemma}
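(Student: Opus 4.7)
The plan is to exploit the piecewise-analytic structure of $\F^2$: at every point that does not lie on the ``kink'' of any ReLU unit inside the network, $F$ coincides locally with a real-analytic map obtained by freezing each ReLU as either the identity or the zero function. The set of kink points will form the measure-zero boundary.

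Concretely, let $N = n_1 + \dots + n_m$ be an upper bound on the total number of ReLU units across the layers $S^1,\dots,S^m$, indexed by pairs $(k,i)$. For each sign pattern $s \in \{-,+\}^N$, define the linearization $F_s$ by replacing each ReLU unit $(k,i)$ by the identity map when $s_{k,i}=+$ and by the zero map when $s_{k,i}=-$ (ReLU-free components $S^k_i$ are left alone). Because each $F^k$ is real-analytic and each identity/zero map is trivially real-analytic, $F_s$ is a composition of real-analytic functions and thus real-analytic on all of $\R^{n_0}$; the corresponding pre-activation $g^k_{i,s}:\R^{n_0}\to\R$ feeding the $(k,i)$-th ReLU in $F_s$ is likewise real-analytic. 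Define the activation region
$$R_s := \{\, x \in [\x,x'] : \text{for every ReLU unit $(k,i)$, the actual pre-activation at $x$ is $>0$ iff $s_{k,i}=+$} \,\},$$
which partitions $[\x,x']$ as $s$ ranges over $\{-,+\}^N$. A straightforward induction on layer depth shows that for $x \in R_s$ the actual network agrees layer-by-layer with $F_s$, so $F \equiv F_s$ on $R_s$; in particular, the actual pre-activation of $(k,i)$ restricted to $R_s$ coincides with $g^k_{i,s}$, and $F$ is real-analytic on $\operatorname{int}(R_s)$ taken in the natural topology of the hyperrectangle.

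Set $U := \bigcup_{s} \operatorname{int}(R_s)$; this is open in $[\x,x']$ and $F$ is real-analytic on it. The complementary ``boundary'' $[\x,x'] \setminus U$ equals $\bigcup_s (R_s \setminus \operatorname{int}(R_s))$, and any $x$ in this complement is a limit of points with a different activation pattern. Since pre-activations are continuous, such a sign flip forces $g^k_{i,s}(x)=0$ for some $(k,i)$, hence
$$[\x,x'] \setminus U \;\subseteq\; \bigcup_{s,(k,i)} \{\, x \in [\x,x'] : g^k_{i,s}(x)=0 \,\}.$$
This is a finite union. Any $g^k_{i,s}$ that is identically zero on the affine hull of $[\x,x']$ never witnesses a sign change, so it cannot actually carry boundary points and may be dropped from the union. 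The surviving $g^k_{i,s}$ are non-trivial real-analytic functions on that affine hull, and the standard fact that the zero set of a non-trivial real-analytic function on $\R^d$ has $d$-dimensional Lebesgue measure zero implies that $[\x,x']\setminus U$ is measure zero in the natural topology of $[\x,x']$. Since finitely many $R_s$ cover the positive-measure set $[\x,x']$, at least one of them has positive measure, and removing its measure-zero boundary leaves a nonempty interior, giving $U \neq \emptyset$.

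The main obstacle is the bookkeeping around ``identically zero'' pre-activations: one must verify carefully that such degenerate $g^k_{i,s}$ do not spuriously inflate $\partial U$, and that the analytic zero-set theorem is invoked on the correct affine hull of $[\x,x']$ (which may be lower-dimensional when some coordinates satisfy $\x_i = x_i'$), since the lemma stipulates openness and measure in the topology of that dimension rather than that of $\R^n$.
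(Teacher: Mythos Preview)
The paper does not prove this lemma; it is quoted from \cite{lundstrom2022rigorous}. The closest in-paper argument is the proof of Theorem~\ref{theorem: softplus approximation topology} in Appendix~\ref{appendix: softplus approximation topology}, which constructs the full-measure analytic locus by induction on layer depth (doing a case split at each stage on whether the analytic pre-activation is identically zero on the current region), rather than by enumerating global sign patterns as you do. Your activation-region decomposition is a legitimate alternative route, and the skeleton---$R_s$ partition, $F\equiv F_s$ on $R_s$ by layer-wise induction, interiors give $U$, boundary lands in analytic zero sets---is sound.

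The one genuine soft spot is exactly the one you flag. Your line ``any $g^k_{i,s}$ that is identically zero \ldots\ never witnesses a sign change'' is not yet an argument: the sign flip you detect is a flip of the \emph{actual} pre-activation $p_{k,i}$, and $p_{k,i}$ agrees with $g^k_{i,s}$ only on $R_s$; on the approaching sequence $x_n\in R_{s'}$ it equals $g^k_{i,s'}$, which a priori need not coincide with $g^k_{i,s}$, so an identically-zero $g^k_{i,s}$ could still sit under boundary points. The clean fix is to choose the witness at the \emph{first layer} $k$ where $s$ and $s'$ differ. The linearizations then agree through layer $k-1$, so $g^k_{i,s}=g^k_{i,s'}$ identically; continuity forces $s_{k,i}=-,\ s'_{k,i}=+$, hence $g^k_{i,s}(x_n)=p_{k,i}(x_n)>0$, and this $g^k_{i,s}$ is non-trivial, so the analytic zero-set theorem applies. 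The paper's layer-by-layer induction in Appendix~\ref{appendix: softplus approximation topology} avoids this bookkeeping because the degenerate identically-zero case is absorbed into its own branch (Cases~1 and~3.1 there) before one ever reaches the zero-set argument.
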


We now present a claim extending theorem~\ref{Theorem: ensembles of path methods in F^1} to functions in $\F^2$. Let $U$ denote the set as described above in Lemma~\ref{RS Lemma: path methods topology}, and denote the set of points on the path $\g$ by $P^\g$. 

\begin{theorem}\label{Theorem: ensembles of path methods in F^2}\cite[Theorem 3]{lundstrom2022rigorous}
Suppose $A\in\A^2$ is defined on $[a,b]\times[a,b]\times \F^1$ and some subset of $[a,b]\times[a,b]\times \F^2$, and satisfies completeness, linearity, dummy, and NDP. Let $\mu^{\cdot,\cdot}$ be the family of measures on monotone paths that defines $A$ on $[a,b]\times[a,b]\times \F^1$ from Theorem~\ref{Theorem: ensembles of path methods in F^1}, and let $(\x$, $x',F)\in[a,b]\times[a,b]\times \F^2$. If $A(\x,x',F)$ is defined, and for almost every path $\g\in\G^m(\x,x')$ (according to $\mu^{\x,x'}$), $\{t\in[0,1]: \g(t)\in \partial U\}$ is a null set w.r.t the Lebesgue measure on $\R$, then $A(\x,x',F)$ is equivalent an ensemble of monotone path methods. Furthermore, this ensemble is defined with the same $\mu^{\cdot,\cdot}$ as Theorem~\ref{Theorem: ensembles of path methods in F^1}.
\end{theorem}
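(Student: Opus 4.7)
My plan is to reduce the assertion on $\F^2$ to Theorem~\ref{Theorem: ensembles of path methods in F^1} via a real-analytic approximation of $F$ and a dominated convergence argument on the ensemble representation. The obvious candidate approximation is to replace each ReLU in the defining composition of $F \in \F^2$ by the softplus $s_\epsilon(x) = \epsilon \log(1+e^{x/\epsilon})$, producing an analytic surrogate $F_\epsilon \in \F^1$ with $F_\epsilon \to F$ uniformly on $[a,b]$ and $\nabla F_\epsilon(x) \to \nabla F(x)$ at every $x \in U$ (the open region from Lemma~\ref{RS Lemma: path methods topology}). The gradient $\nabla F_\epsilon$ stays uniformly bounded in $\epsilon$ because softplus derivatives lie in $(0,1)$ and each real-analytic layer of the network is Lipschitz on the compact domain $[a,b]$.

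Applying Theorem~\ref{Theorem: ensembles of path methods in F^1} to each $F_\epsilon \in \F^1$ yields
\begin{equation*}
A(\x,x',F_\epsilon) \;=\; \int_{\G^m(\x,x')} A^\g(\x,x',F_\epsilon)\,d\mu^{\x,x'}(\g),
\end{equation*}
and I would pass to the limit $\epsilon \to 0^+$ on the right-hand side by two applications of dominated convergence. The path hypothesis on $\mu^{\x,x'}$ guarantees that $\g(t) \in U$ for Lebesgue-a.e.\ $t$ along $\mu^{\x,x'}$-a.e.\ path $\g$, so $\nabla F_\epsilon(\g(t)) \cdot \g'(t) \to \nabla F(\g(t)) \cdot \g'(t)$ a.e.; the uniform gradient bound and the bounded variation of each monotone $\g$ supply the dominating integrand for both the inner integral defining $A^\g$ and the outer integral over paths. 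Hence the right-hand side tends to $\int_{\G^m(\x,x')} A^\g(\x,x',F)\,d\mu^{\x,x'}(\g)$, the target ensemble value.

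The main obstacle is closing the gap on the left: arguing $A(\x,x',F_\epsilon) \to A(\x,x',F)$. The four axioms do not a priori endow $A$ with continuity in its function argument, so I would sandwich $F$ between real-analytic envelopes built from the axioms themselves. Concretely, I would construct $F^\pm_\epsilon \in \F^1$ with $F^-_\epsilon \le F \le F^+_\epsilon$ and both $F - F^-_\epsilon$ and $F^+_\epsilon - F$ non-decreasing from $x'$ to $\x$ along every monotone path; these can be produced by biased softplus approximations at each ReLU (shifted up or down so as to dominate or be dominated by the true ReLU) and propagated through the compositional structure. Linearity then decomposes $A(\x,x',F) = A(\x,x',F^-_\epsilon) + A(\x,x',F - F^-_\epsilon)$; NDP gives $A(\x,x',F - F^-_\epsilon) \ge 0$ coordinatewise; and completeness bounds its entries in magnitude by the vanishing scalar $(F - F^-_\epsilon)(\x) - (F - F^-_\epsilon)(x')$. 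The symmetric argument with $F^+_\epsilon$ squeezes $A(\x,x',F)$ from above. Since both $A(\x,x',F^\pm_\epsilon)$ converge to the same ensemble integral by the earlier step, $A(\x,x',F)$ is pinned to that common value and the theorem follows. The delicate part is verifying that the envelopes and their differences actually lie in $D_A$ so that linearity and NDP apply; absent this, one would need to assume some mild regularity of $A$ on $\F^2$ directly.
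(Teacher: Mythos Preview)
Your overall architecture---approximate $F$ by analytic functions, invoke Theorem~\ref{Theorem: ensembles of path methods in F^1} on the approximants, pass to the limit on the ensemble side by dominated convergence using the path hypothesis, and close the left-hand side by an NDP/linearity squeeze---is exactly the skeleton the paper (via \cite[Theorems~2--3]{lundstrom2022rigorous}) uses, and your DCT step is correct as written. The gap is in the squeeze: your proposed envelopes $F^\pm_\epsilon$ with $F-F^-_\epsilon$ and $F^+_\epsilon-F$ \emph{non-decreasing from $x'$ to $\x$} cannot be produced by ``biased softplus at each ReLU, propagated through the composition.'' Two obstructions: (a) the intermediate analytic layers $F^k$ need not be monotone, so a pointwise inequality at a ReLU does not survive composition; (b) even if it did, pointwise dominance $F^-_\epsilon\le F$ is not the NDP hypothesis---you need $\partial_i(F-F^-_\epsilon)$ to have the sign of $\x_i-x_i'$ for \emph{every} $i$ on $[\x,x']$. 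Already for $F(x)=-\mathrm{ReLU}(x_1)$ with $F^-_\epsilon=-s_\epsilon(x_1)$ one has $\partial_1(F-F^-_\epsilon)=\sigma(x_1/\epsilon)-\mathbbm{1}[x_1>0]$, which changes sign, so $F-F^-_\epsilon$ is not monotone along the straight path. No uniform shift of the softplus fixes this, since $s_\epsilon'\in(0,1)$ can never be $\le s'$ on $\{x<0\}$.

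The paper's remedy is to abandon the search for a global envelope and argue \emph{coordinate by coordinate}. Fixing $i$, one constructs (see the invocations in Appendices~\ref{appendix: proportionality characterization} and~\ref{appendix: symmetric monotonicity for A2}, citing \cite[Theorem~2]{lundstrom2022rigorous}) analytic $F_m$ with $\partial_i F_m\le \partial_i F$ wherever the latter exists, $\partial_i F_m\to\partial_i F$ there, and $|\partial_i F_m|\le k$. This only makes $F-F_m$ non-decreasing \emph{in the $i$-th coordinate}. To upgrade to the full NDP hypothesis one adds a linear corrector: since $F-F_m$ is Lipschitz on $[a,b]$, choose $c\in\R^n$ with $c_i=0$ and $c_j$ large (with the sign of $\x_j-x_j'$) so that $F-F_m+c^\intercal x$ is non-decreasing from $x'$ to $\x$. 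NDP now gives $A_i(\x,x',F-F_m+c^\intercal x)\ge 0$; linearity splits this; and because $A$ is already identified on $\F^1$ as the ensemble (hence satisfies dummy), $A_i(\x,x',c^\intercal x)=0$ since $c_i=0$. This yields $A_i(\x,x',F)\ge A_i(\x,x',F_m)=\text{ensemble}_i(\x,x',F_m)\to\text{ensemble}_i(\x,x',F)$ by your DCT argument. A symmetric construction with $\partial_i F_m\ge\partial_i F$ gives the reverse inequality. The domain issue you flag is handled automatically: $F_m,c^\intercal x\in\F^1\subset D_A$, and linearity then places $F-F_m+c^\intercal x$ in $D_A$ once $(\x,x',F)\in D_A$.
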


The above result answers two questions: 1) is $A$ an ensemble of path methods when evaluating models in $\F^2$, and 2) is that ensemble the same ensemble that $A$ uses to evaluate models in $\F^1$? The above theorem guarantees that when considering models in $\F^2$ which may not be differentiable on $[a,b]$, $A$ is still an ensemble of path methods, and, in fact, is the same ensemble that define's $A$'s action on models in $\F^1$. Thus Theorem~\ref{Theorem: ensembles of path methods in F^2} establishes that while ensembles of path methods uniquely satisfy a set of axioms for attributions in $\A^1$, they also satisfy these axioms for models in $\F^2$, if it makes sense to do so.

\subsection{Characterizing IG Among Monotone Path Methods}
Among ensembles of monotone path methods, another popular method exists: the Shapley value \cite{shapley1971assignment}, \cite{lundberg2017unified}. The Shapley value is obtained by considering average change in function value when a component's value is changed from $x_i'$ to $\x_i$. Specifically, consider all possible ways that $x'$ can transition to $\x$ by sequentially toggling each component from $x_i'$ to $\x_i$. The Shapley value for $\x_i$ is the average change in function value over all possible transitions via toggling. This method can be formulated as an ensemble of $n!$ path methods. With speedups, calculating the Shapley value precisely is exponential in the number of inputs, and significant effort has been put into faster calculation via approximation \cite{chen2023algorithms}. The Shapley value was criticized as potentially problematic compared to IG in the original IG paper \cite{sundararajan2017axiomatic}[Remark 5].

\vspace{2mm}
As an alternative to this approach, the most computationally efficient ensemble would be an ensemble composed of a single-path method. It can be shown that IG is the unique path method that satisfies a couple of axioms.

\vspace{2mm}
The first axiom, \textit{symmetry-preserving}, is given as:
\begin{enumerate}[resume*]\label{axiom:symmetry preserving}
    \item\textit{Symmetry-Preserving}: For a vector $x$ and indices $1\leq i$, $j \leq n$, define $x^*$ by swapping the values of $x_i$ and $x_j$. Now suppose that $\forall x\in [a,b]$, $F(x) = F(x^*)$. Then if $(\x,x',F)\in D_A$, $\x_i=\x_j$ and $x'_i=x'_j$, we have $A_i(\x,x',F) = A_j(\x,x',F)$.
\end{enumerate}

\noindent Symmetry-preserving requires ``swappable" features with identical values to give identical attributions. This axiom was introduced in~\cite{sundararajan2017axiomatic}, but was criticized as insufficient to characterize IG among path methods in~\cite{lerma2021symmetry}. In short, other path methods exist that take the straight line path as IG does when $x_i'=x_j'$, $\x_i=\x_j$, but deviate otherwise. These counter-examples exist because symmetry-preserving only makes requirements when $x_i'=x_j'$, $\x_i=\x_j$, not otherwise. To remedy this, we considered strengthening the symmetry axioms, but found it insufficient to characterize IG among path methods. See Appendix~\ref{Appendix: symmetry-preserving commentary} for details.

\vspace{2mm}
The second axiom, \textit{Affine Scale Invaraince}, is given as:
\begin{enumerate}[resume*]\label{axiom:ASI}
    \item\textit{Affine Scale Invariance (ASI)}: For a given index $i$, constants $c\neq 0$, $d$, define the affine transformation $T(x) := (x_1,...,cx_i+d,...,x_n)$. Then whenever $\x, x', T(\x), T(x')\in [a,b]$, we have $A(\x,x',F) = A(T(\x),T(x'),F  \circ  T^{-1})$.
\end{enumerate}

\noindent This axiom can be justified by considering unit conversion. Suppose $F$ is some machine learning model where input $\x_i$ is given in degrees Fahrenheit. $T$ could be an affine transformation that converts the $i^\text{th}$ input from Fahrenheit to Celcius, so that $F\circ T^{-1}$ is an adjusted model where $\x_i$ would be given in Celsius, converted to Fahrenheit, then input into the original model. Affine scale invariance would require that an attribution method $A$ give the same attributions whether in Fahrenheit inputs, $(\x,x',F)$, or Celcius inputs, $(T(\x),T(x'),F\circ T^{-1})$.

\vspace{2mm}
It is interesting to note that ASI effectively means that the shape of a path for a path method stays the same regardless of the input or baseline values. Explicitly, suppose $A^\g$ is a path method satisfying ASI. For any $\x$, $x'$ there exists a unique affine transformation $T$ such that $T(x') = 0$, $T(\x) = 1$, where by 0 and 1 we mean the vectors with entries that are all zero or one, respectively. Thus $A^\g(\x,x',F) = A^\g(T(\x),T(x'),F\circ T^{-1}) = A^\g(1,0,F\circ T^{-1})$. The final expression uses the path $\g(1,0,t)$, and ignores the form of the path for $\g(\x,x',t)$. This causes the path to keep the same shape, so that all paths are an affine stretching of the base path from $x'=0$ to $\x=1$.

\vspace{2mm}
With symmetry-preserving and ASI, IG can be characterized among monotone path methods:

\begin{theorem}\label{Theorem: symmetry preserving and ASI} (Symmetry-Preserving Path Method Characterization on $\A^2$)
If $A\in\A^2(D_\IG)$ is a monotone path method satisfying ASI and symmetry-preserving, then it is the Integrated Gradients method. 
\end{theorem}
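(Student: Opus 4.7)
The plan combines the two axioms as follows: ASI forces the path of $A=A^\g$ to have a fixed normalized shape, and symmetry-preserving pins that shape down to the straight-line diagonal, so a change of variables recovers IG. For step one, as the excerpt already observes, ASI implies the path is shape-preserving: writing $\bar\g(t):=\g(\mathbf{1},\mathbf{0},t)$ (we may reduce to the case $\x_i\neq x'_i$ for every $i$, since coordinates with $\x_i=x'_i$ vanish by dummy), every other path is an affine stretching $\g(\x,x',t)=x'+\bar\g(t)\odot(\x-x')$, with $\odot$ the componentwise product. The curve $\bar\g$ is continuous and monotone from $\mathbf{0}$ to $\mathbf{1}$. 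It suffices to prove $\bar\g_i(t)=\bar\g_j(t)$ for every coordinate pair $i,j$ and all $t$: then $\bar\g(t)=\p(t)\mathbf{1}$ with $\p$ nondecreasing from $0$ to $1$, and the substitution $s=\p(t)$ in the path-method integral recovers $\IG_i(\x,x',F)=(\x_i-x_i')\int_0^1\pxi(x'+s(\x-x'))\,ds$.

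The heart of the proof is the pairwise equality $\bar\g_i=\bar\g_j$. Fix indices $i,j$ and consider any polynomial $F(x)=g(x_i,x_j)$ with $g(u,v)=g(v,u)$; then $F\in\F^1$, so $(\mathbf{1},\mathbf{0},F)\in D_\IG$. Symmetry-preserving gives $A_i=A_j$, and completeness (automatic for path methods via the fundamental theorem of line integrals) gives $A_i+A_j=g(1,1)-g(0,0)$, hence $A_i=A_j=(g(1,1)-g(0,0))/2$. The straight-line IG path yields the same value, because $g_u(t,t)=g_v(t,t)$ for symmetric $g$ implies $\int_0^1 g_u(t,t)\,dt=(g(1,1)-g(0,0))/2$. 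Therefore the 2D projected curve $C_{ij}(t):=(\bar\g_i(t),\bar\g_j(t))$ satisfies $\int_{C_{ij}}g_u\,du=\int_{\mathrm{diag}}g_u\,du$ for every symmetric polynomial $g$. Letting $\ell$ be the closed loop obtained by traversing $C_{ij}$ forward and the diagonal backward, Green's theorem on the 1-form $g_u\,du$ converts this into
\[ \iint_D g_{uv}(u,v)\,du\,dv=0 \quad\text{for every symmetric polynomial }g, \]
where $D$ carries the winding-number signed measure of $\ell$. Since $g(u,v)=u^m v^n+u^n v^m$ gives $g_{uv}=mn(u^{m-1}v^{n-1}+u^{n-1}v^{m-1})$, and these span the symmetric polynomials, Stone--Weierstrass extends the identity to all symmetric continuous functions on $[0,1]^2$. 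Hence $D$ is antisymmetric under the reflection $(u,v)\mapsto(v,u)$.

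Step three extracts $D=0$ from antisymmetry combined with monotonicity. Suppose $C_{ij}$ deviates from the diagonal and let $t^*:=\inf\{t:\bar\g_i(t)\neq\bar\g_j(t)\}$, so by continuity $\bar\g_i(t^*)=\bar\g_j(t^*)=a$. On a maximal subinterval $(t^*,t^{**})$ the sign of $\bar\g_j-\bar\g_i$ is constant, WLOG positive; continuity and monotonicity force $\bar\g_i(t^{**})=\bar\g_j(t^{**})=b$ with $b>a$, and this excursion encloses a region of positive planar area inside $[a,b]^2\cap\{v>u\}$. Any later excursion, upper or lower, must begin at some $(c,c)$ with $c\ge b$ and remain in $[c,1]^2$; since $[a,b]\cap[c,1]\subseteq\{b\}$, any such region meets $[a,b]^2$ only on a set of Lebesgue measure zero. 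Consequently $D$ has strictly positive mass in $[a,b]^2\cap\{v>u\}$ but zero mass in $[a,b]^2\cap\{u>v\}$, contradicting antisymmetry. Thus $\bar\g_i\equiv\bar\g_j$, which completes the reduction.

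The main technical obstacle is the rigorous use of Green's theorem in step two, since $\bar\g$ is only continuous and monotone and the loop $\ell$ may touch the diagonal on arbitrarily complicated sets. The cleanest route is to phrase Green's identity via the winding-number distribution of a rectifiable closed curve (monotone paths are of bounded variation, so this applies), or to approximate $\bar\g$ by smooth strictly monotone paths and pass to the limit using continuity of the relevant line integrals in the curve. Once these analytic details are in place, the moment-vanishing argument and the monotone-excursion argument are essentially bookkeeping.
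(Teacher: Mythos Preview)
Your argument is correct, but it takes a genuinely different route from the paper. After the common ASI normalization to $\bar\g:=\g(\mathbf{1},\mathbf{0},\cdot)$, the paper does not run a moment argument: it fixes one maximal excursion interval $(t_a,t_b)$ where (say) $\bar\g_1>\bar\g_2$, writes $c=\bar\g_1(t_a)=\bar\g_2(t_a)$, $d=\bar\g_1(t_b)=\bar\g_2(t_b)$, and tests against the single symmetric function $F(x)=\text{ReLU}\!\left[-\text{ReLU}(x_1x_2-c^2)+d^2-c^2\right]\in\F^2$. On the excursion one then gets $A_1-A_2=\int_{t_a}^{t_b}(\bar\g_1\,d\bar\g_2-\bar\g_2\,d\bar\g_1)$, which equals twice the (signed) area enclosed by the excursion and the diagonal segment, hence is nonzero---contradicting symmetry-preserving. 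Your approach instead tests against \emph{all} symmetric polynomials, converts via Green's theorem to a moment problem for the winding-number measure, deduces antisymmetry of that measure, and then uses monotonicity to rule out cancellation between excursions. The paper's route is shorter and avoids the measure-theoretic layer; yours is more structural, uses only $\F^1$ test functions, and makes the geometric content (enclosed area) explicit through the winding-number picture.

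Two small points to tighten. First, your worry about Green's theorem for ``only continuous and monotone'' paths is moot: by Definition~\ref{def:path method} the path is piecewise smooth, so the classical theorem applies directly. Second, setting $t^*=\inf\{t:\bar\g_i(t)\neq\bar\g_j(t)\}$ and then speaking of ``a maximal subinterval $(t^*,t^{**})$ with constant sign'' is slightly off, since the sign of $\bar\g_j-\bar\g_i$ can oscillate immediately to the right of $t^*$ even with both coordinates monotone. It is cleaner to pick any $t_0$ with $\bar\g_i(t_0)\neq\bar\g_j(t_0)$ and take the maximal constant-sign interval containing $t_0$; monotonicity then confines \emph{every} other excursion (earlier or later) to $[0,a]^2\cup[b,1]^2$, and your antisymmetry contradiction goes through unchanged. (Also, the reduction to $\x_i\neq x_i'$ is not ``by dummy'' but simply because a monotone path with $\x_i=x_i'$ has $\g_i'\equiv 0$.)
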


\noindent 
The proof of Theorem~\ref{Theorem: symmetry preserving and ASI} is relegated to Appendix~\ref{appendix: symmetry-preserving and ASI}.

\section{Characterizing IG with ASI and Proportionality}
A second attempt at characterizing the Integrated Gradients was presented in The Many Shapley Values for Model Explanation paper~\cite{sundararajan2017axiomatic}, which was also critiqued by \cite{lundstrom2022rigorous} later. Here we present the characterization.

\vspace{2mm}
The axiom of \textit{proportionality} states,
\begin{enumerate}[resume*]\label{axiom:proportionality}
    \item\textit{Proportionality}: If there exists $G:[a,b]\rightarrow \R$ such that for all $x\in [a,b]$, $F(x) = G(\sum_ix_i)$, then there exists $c\in\R$ such that $A_i(\x,0,F) = c\x_i$ for $1\leq i\leq n$.
\end{enumerate}
This axiom states that if $F$ can be expressed as a function of the cumulative quantity, $\sum_i x_i$, then each attribution is proportional to its contribution to $\sum_i \x_i$, namely $\x_i$. This axiom originates from the context of cost-sharing \cite{friedman1999three}, where each $\x_i$ may represent an investment. As an example, if the return on investment, $F(\x)$, is a function of the cumulative dollars invested, $\sum_i \x_i$, then proportionality asserts that the payout to each investor should be proportional to the amount invested. This principle does not always apply in cost-sharing problems, as when different investors make different kinds of contributions to an investment. This principle is fitting, however, when all investments are of the same kind so that the payout is simply a function of the total investment. Admittedly, this axiom appears at first glance to be more sensible in the cost-sharing context than the ML attributions context, and depends on the application of interest.

\vspace{2mm}
With proportionality and ASI, we can characterize IG:

\begin{theorem}\label{Theorem: proportionality characterization}(Proportionality Characterization on $\A^2$)
Suppose that $A\in \A^2(D_\IG)$. Then the following are equivalent:

% \vspace{-0.4cm}

\begin{enumerate}
    \item[i.] $A$ satisfies linearity, ASI, completeness, NDP, and proportionality.
% \vspace{-0.3cm}
    \item[ii.] $A$ is the Integrated Gradients method.
\end{enumerate}
\end{theorem}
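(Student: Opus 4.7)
The forward direction $(\text{ii})\Rightarrow(\text{i})$ is direct verification. Linearity, completeness, and NDP for IG are covered in the excerpt; ASI follows from change of variables in the integral defining IG; and proportionality follows from the calculation $\IG_i(\x,0,G\circ\Sigma) = \x_i\int_0^1 G'(t\sum_j \x_j)\,dt$, which is a common scalar (independent of $i$) times $\x_i$.

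For the converse $(\text{i})\Rightarrow(\text{ii})$, the central step is to pin $A$ down on every weighted-sum function $F(x) = G(\sum_j c_j x_j)$ with all $c_j\neq 0$. Given such $F$ at $x'=0$, iterate coordinate-wise ASI with scaling $c_j$ on coordinate $j$ to transform $F$ into $\tilde F(y) = G(\sum_j y_j)$ evaluated at $\tilde\x = (c_1\x_1,\dots,c_n\x_n)$. Proportionality then forces $A_i(\tilde\x,0,\tilde F) = c'\tilde\x_i$ for a common scalar $c'$, and completeness determines $c' = (G(\sum_j \tilde\x_j)-G(0))/\sum_j \tilde\x_j$. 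Undoing the ASI reproduces precisely $\IG_i(\x,0,F)$, and translation-type ASI then extends the identity to arbitrary baselines $x'$.

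Linearity combined with polynomial polarization (every homogeneous polynomial of degree $k$ is a finite linear combination of $k$-th powers $(\sum_j c_j x_j)^k$, with the forms chosen generically so that all $c_j\neq 0$) extends $A = \IG$ to all polynomials. To reach general real analytic $F\in\F^1$, approximate by Taylor partial sums $P_N$ that converge uniformly with all derivatives on $[a,b]$; with $\epsilon_N = \sup_{i,[a,b]}|\partial_i(F-P_N)|$ and $s_j = \operatorname{sign}(\x_j-x'_j)$, set $M(x) = \epsilon_N\sum_j s_j x_j$. Both $M\pm(F-P_N)$ have every partial derivative with sign matching $s_i$, hence are non-decreasing along every monotone path from $x'$ to $\x$, so NDP and linearity give the componentwise sandwich $|A_i(\x,x',F-P_N)| \le \IG_i(\x,x',M) = \epsilon_N|\x_i-x'_i| \to 0$. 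Hence $A = \IG$ on $\F^1$.

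Finally, lift from $\F^1$ to $\F^2$ via Theorem~\ref{Theorem: ensembles of path methods in F^2}: $A = \IG$ on $\F^1$ implies dummy on $\F^1$, so Theorem~\ref{Theorem: ensembles of path methods in F^1} represents $A$ restricted to $\F^1$ as an ensemble of monotone paths, and equality with $\IG$ forces the measure $\mu^{\x,x'}$ to be the point mass on the straight-line path. Theorem~\ref{Theorem: ensembles of path methods in F^2} then propagates the representation to $\F^2$ on $D_\IG$ (the null-set condition is built into the definition of $D_\IG$), yielding $A = \IG$ throughout $D_\IG$. I expect the main obstacle to be the polarization/continuity bridge: arranging the polynomial approximations so every summand has only nonzero weights (a generic-position argument) and handling the NDP sandwich carefully across mixed signs of $\x_j - x'_j$. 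Everything after that reduces to bookkeeping with the already-proven Theorems~\ref{Theorem: ensembles of path methods in F^1} and~\ref{Theorem: ensembles of path methods in F^2}.
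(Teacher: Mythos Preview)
Your route diverges from the paper's at two places, and in each there is a gap worth naming.

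\textbf{The $\F^1$ step.} The paper does not polarize. It cites \cite[Theorem~3]{friedman1999three} as a black box to identify $A$ with $\IG$ on the cone of non-decreasing analytic functions at $x'=0$, $\x\ge 0$; then adds a linear term $c^\top x$ and uses linearity to reach all analytic $F$ at $(1,0)$; then applies ASI (Eq.~\eqref{ASI technique}) to reach general $(\x,x')$. Your polarization argument is a reasonable, more self-contained alternative, but the bridge from polynomials to all of $\F^1$ is broken as written: the Taylor series of a real analytic $F$ centered at $x'$ need not converge on all of $[a,b]$ (e.g.\ $F(x)=1/(1+x^2)$ on $[-2,2]$), so you cannot claim $\sup_{[a,b]}|\partial_i(F-P_N)|\to 0$ for Taylor partial sums. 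You need a genuine $C^1$-density statement for polynomials on compact sets (approximate $\partial_i F$ via Stone--Weierstrass and integrate, as done in Appendix~\ref{appendix: symmetric monotonicity for A1}); with the $P_N$ chosen that way your NDP sandwich goes through.

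\textbf{The $\F^2$ step.} The paper does not route through Theorem~\ref{Theorem: ensembles of path methods in F^2}. It reuses the one-sided analytic approximation plus NDP squeeze (the device from \cite[Theorem~2]{lundstrom2022rigorous}, also used in Appendix~\ref{appendix: symmetric monotonicity for A2}): for each $i$ build analytic $F_m$ with $\partial_i F_m\le \partial_i F$ a.e.\ and $F-F_m$ non-decreasing from $x'$ to $\x$ in coordinate $i$; then NDP, linearity, ``$A=\IG$ on $\F^1$'', and DCT give $A_i(\x,x',F)\ge \IG_i(\x,x',F)$, and symmetrically the reverse. Your route via Theorem~\ref{Theorem: ensembles of path methods in F^2} has a hole: that theorem assumes dummy on the full domain, which is not among the hypotheses of Theorem~\ref{Theorem: proportionality characterization} and which you have only deduced on $\F^1$. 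A smaller point: you do not need, and have not shown, that the representing measure is \emph{forced} to be the point mass at the straight line; what you actually need is that the point mass is \emph{one} valid representation of $\IG$ on $\F^1$ (trivial) and that this particular measure satisfies the null-set condition on $D_\IG$. You could separately derive dummy on $\F^2$ by another NDP sandwich and then invoke Theorem~\ref{Theorem: ensembles of path methods in F^2}, but at that point the paper's direct argument is shorter and avoids the detour entirely.
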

% \vspace{-0.3cm}
%

\noindent 
The proof of Theorem~\ref{Theorem: proportionality characterization} is deferred to Appendix~\ref{appendix: proportionality characterization}. Note that unlike theorem~\ref{Theorem: symmetry preserving and ASI}, which characterized IG among monotone path methods, this is a much broader characterization, establishing that all BAMs in $\A^2$, only IG satisfies the given axioms.

\section{Characterizing IG with Symmetric Monotonicity}

We next present a characterization of IG employing the concept of \textit{monotonicity}. The axiom of monotonicity can be stated as:

\begin{enumerate}
    \item[8a.]\textit{Monotonicity}: Suppose $F\in\F^1$. Then,
    \begin{enumerate}
        \item[i.] If $\x_i\neq x_i'$, then $\frac{\partial F}{\partial x_i}(x) \leq \frac{\partial G}{\partial x_i}(x)$ $ \forall x \in [\x,x']$ implies $\frac{A_i(\x,x',F)}{\x_i-x_i'} \leq \frac{A_i(\x,x',G)}{\x_i-x_i'}$.
        \item[ii.] If $\x_i= x_i'$, then $A_i(\x,x',F) = 0$.
    \end{enumerate}
\end{enumerate}
To explain i., the term $\frac{A_i(\x,x',F)}{\x_i-x_i'}$ is the per-unit attribution of $\x_i$. We take the contribution of $\x_i$ to the change in $F$, denoted $A_i(\x,x',F)$, and divide it by the total change in $\x_i$. If $\x_i$ contributed to $F$ increasing, but $\x_i$ decreased from the baseline, then the per-unit attribution of $\x_i$ would be negative. As an example, suppose both derivatives are positive and $\x_i>x_i'$. If increasing $\x_i$ causes at least as great an increase for $G$ as it does for $F$, then according to monotonicity, the per-unit attribution of $\x_i$ should be at least as great for $G$ as $F$.

\vspace{2mm}
Requirement ii$.$ is the continuous extension of i$.$ to the $\x_i=x_i'$ case under the assumption of completeness and dummy. To demonstrate this extension, let $F\in \F^1$, so that $c\leq \frac{\partial F}{\partial x_i}\leq d$ on the bounded domain $[a,b]$. Then, by completeness and dummy, $A_i(\x,x',cx_i) = c(\x_i-x_i')$ and $A_i(\x,x',dx_i) = d(\x_i-x_i')$. Thus $c = \frac{A_i(\x,x',cx_i)}{\x_i-x_i'}\leq \frac{A_i(\x,x',F)}{\x_i-x_i'}\leq \frac{A_i(\x,x',dx_i)}{\x_i-x_i'} = d$. Now, as $\x_i\rightarrow x_i'$, we have $A_i(\x,x',F) \rightarrow 0$.

\vspace{2mm}
With the idea of monotonicity, one can assert a similar principle to the comparison between different inputs with the axioms, \textit{symmetric monotonicity}:
\begin{enumerate}\label{axiom:symmetric monotonicity}
    \item[8b.]\textit{Symmetric Monotonicity}: Suppose $A\in\A^1$, $F$, $G\in\F^1$. Then:
    \begin{enumerate}
        \item[i.] If $\x_i\neq x_i'$ and $\x_j\neq x_j'$, then $\frac{\partial F}{\partial x_i}(x) \leq \frac{\partial G}{\partial x_j}(x)$ $ \forall x \in [\x,x']$ implies $\frac{A_i(\x,x',F)}{\x_i-x_i'} \leq \frac{A_j(\x,x',G)}{\x_j-x_j'}$. 
        \item[ii.] If $\x_i= x_i'$, then $A_i(\x,x',F) = 0$.
    \end{enumerate}
\end{enumerate}
Symmetric monotonicity enforces that the principle of monotonicity can be applied between different inputs. With symmetric monotonicity, we give the following characterization of IG among methods in $\A^1$:

\begin{theorem}\label{Theorem: symmetric monotonicity for A1}(Symmetric Monotonicity Characterization on $\A^1$)
Suppose that $A\in \A^1$. Then the following are equivalent:

% \vspace{-0.4cm}

\begin{enumerate}
    \item[i.] $A$ satisfies completeness, dummy, linearity, and symmetric monotonicity.
% \vspace{-0.3cm}
    \item[ii.] $A$ is the Integrated Gradients method.
\end{enumerate}
\end{theorem}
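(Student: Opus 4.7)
My plan is to handle the forward direction by direct computation and the reverse direction by showing that the axioms pin down $A$ on monomials, then passing to all analytic functions by linearity and a continuity estimate. The forward direction is routine: linearity, dummy, and completeness are established in Section~\ref{section:IG}, and for symmetric monotonicity the identity $\IG_i(\x,x',F)/(\x_i-x_i') = \int_0^1 \frac{\partial F}{\partial x_i}(x'+t(\x-x'))\,dt$ lets a pointwise inequality between partials on $[\x,x']$ integrate to the corresponding per-unit attribution inequality, while part~ii follows from the $(\x_i-x_i')$ prefactor.

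For the reverse direction, assume $\x \neq x'$ (the degenerate case is immediate from symmetric monotonicity~ii). I would first apply symmetric monotonicity in both directions: with $i = j$, matched partials $\frac{\partial F}{\partial x_i} = \frac{\partial G}{\partial x_i}$ on $[\x,x']$ force $A_i(\x,x',F) = A_i(\x,x',G)$, and with $i \neq j$, matched cross-index partials $\frac{\partial F}{\partial x_i} = \frac{\partial G}{\partial x_j}$ force the per-unit attributions to agree. Hence $A_i(\x,x',F) = (\x_i - x_i')\,\phi(\x,x',\frac{\partial F}{\partial x_i})$ for a single linear functional $\phi$ depending only on $h|_{[\x,x']}$, provided $\x_i \neq x_i'$ (the remaining coordinates vanish by symmetric monotonicity~ii). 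Next I would derive a sandwich estimate $|A_i(\x,x',F)| \leq \|\frac{\partial F}{\partial x_i}\|_{\infty,[\x,x']}\cdot |\x_i - x_i'|$ by comparing $\frac{\partial F}{\partial x_i}$ to the constants $\pm\epsilon$ via symmetric monotonicity, noting that completeness plus dummy give $A_i(\x,x',\pm\epsilon x_i) = \pm\epsilon(\x_i - x_i')$. Since Taylor polynomials of any $F \in \F^1$ converge with their derivatives uniformly on the compact set $[\x,x']$, this estimate reduces the problem to polynomials, and linearity reduces it further to monomials.

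To determine $\phi$ on a monomial $h = x^\alpha$, I would fix an anchor index $i$ with $\x_i \neq x_i'$ and induct on the restricted degree $\sum_{k \neq i}\alpha_k$. The base case $\alpha_k = 0$ for all $k \neq i$ uses $F = x_i^{\alpha_i + 1}/(\alpha_i + 1)$; dummy kills the $j \neq i$ coordinates and completeness forces $A_i(\x,x',F) = F(\x) - F(x')$, matching the IG value after a direct calculation. For the inductive step I would take $F = \frac{1}{\alpha_i+1}x_i^{\alpha_i+1}\prod_{k \neq i}x_k^{\alpha_k}$, so that $\frac{\partial F}{\partial x_i} = h$ while each $\frac{\partial F}{\partial x_j}$ for $j \neq i$ with $\alpha_j \geq 1$ is a scalar multiple of $x^{\alpha + e_i - e_j}$, a monomial with strictly smaller restricted degree. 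Completeness then yields a linear relation expressing $(\x_i - x_i')\phi(\x,x',x^\alpha)$ in terms of $F(\x)-F(x')$ and previously-determined $\phi$-values. Since IG satisfies the axioms, $\phi_{\IG}$ obeys the same relation, so the induction closes and $\phi = \phi_{\IG}$.

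The main obstacle I foresee is choosing the right induction parameter: the total degree of a monomial is not decreased by antidifferentiating in one variable and then differentiating in another, but the restricted degree $\sum_{k \neq i}\alpha_k$ with respect to a fixed anchor coordinate strictly decreases, which is what makes the recursion terminate. A secondary subtlety is ensuring $\phi$ is well-defined across different choices of representative $F$ and index $i$; this is exactly what the ``both directions'' use of symmetric monotonicity achieves, and it must be established before the recurrence machinery can even be stated.
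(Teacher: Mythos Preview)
Your proposal is correct and shares the paper's overall architecture (pin down monomials by induction, then pass to analytic functions by a sup-norm sandwich derived from monotonicity), but the two induction arguments are genuinely different. The paper, following \cite{young1985producer}, works directly with the monomials $[x-x']^m$ and runs a \emph{lexicographic} induction: for the minimal bad exponent $m^*$ it picks the least disagreeing index $i$, builds $G$ with exponent $m^* - e_i + e_n$ so that $\partial_i F = \partial_n G$, and then invokes symmetric monotonicity itself in the inductive step to transfer the per-unit attribution. You instead front-load symmetric monotonicity to extract the single linear functional $\phi$, and your induction (on the restricted degree $\sum_{k\neq i}\alpha_k$ relative to a fixed anchor~$i$) uses only \emph{completeness} at each step: antidifferentiate $x^\alpha$ in $x_i$, and the other partials have strictly smaller restricted degree. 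This is arguably cleaner, since the role of symmetric monotonicity is isolated to the definition of $\phi$, and the recursion itself is pure bookkeeping. The paper's approach has the virtue of mirroring the cost-sharing literature directly.

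One small point: for the approximation step the paper uses Stone--Weierstrass on $\partial_i F$ rather than Taylor expansion of $F$. Your Taylor argument works under the paper's standing assertion that Taylor polynomials of $F\in\F^1$ converge in $C^1$ on all of $[a,b]$, but if one is cautious about the radius of convergence of a real-analytic function on a compact box, Stone--Weierstrass is the safer route and plugs into your sandwich estimate identically.
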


\noindent 
The proof of Theorem~\ref{Theorem: symmetric monotonicity for A1} is located in Appendix~\ref{appendix: symmetric monotonicity for A1}.

\vspace{2mm}
To extend the results to $\A^2$, we consider two options. The first is to include NDP, and the second is to include a version of symmetric monotonicity that is formulated for functions that may not be differentiable. To do this, we replace the condition $\frac{\partial F}{\partial x_i}(x) \leq \frac{\partial G}{\partial x_j}(x)$ $ \forall x \in [\x,x']$ with a condition applicable to non-differentiable functions.

\vspace{2mm}
Supposing $F$, $G\in \F^2$, we define the statement $\frac{\partial F}{\partial x_i}(x) \leq \frac{\partial G}{\partial x_j}(x)$ \textit{locally approximately} to mean: $ \exists \epsilon > 0$ such that $ |z| < \epsilon$ implies $\frac{F(x_1,...,x_i+z,...,x_n) - F(x)}{z} \leq \frac{G(x_1,...,x_j+z,...,x_n) - G(x)}{z}$ whenever both terms exists. The above statement indicates we have something akin to $\frac{\partial F}{\partial x_i}(x) \leq \frac{\partial G}{\partial x_j}(x)$, using local secant approximations of the derivative. We now state \textit{$\mathcal{C}^0$-symmetric monotonicity}, an adjustment to symmetric monotonicity for BAMs in $\A^2$:

\begin{enumerate}\label{axiom:c0symmetricmonotonicity}
    \item[8c.]\textit{$\mathcal{C}^0$-Symmetric Monotonicity}: Suppose $A\in\A^2(D_\IG)$, $(\x,x',F)$, $(\x,x',G)\in D_\IG$. Then:
    \begin{enumerate}
        \item[i.] If $\x_i\neq x_i'$ and $\x_j\neq x_j'$, then $\frac{\partial F}{\partial x_i}(x) \leq \frac{\partial G}{\partial x_j}(x)$ locally approximately $ \forall x \in [\x,x']$ implies $\frac{A_i(\x,x',F)}{\x_i-x_i'} \leq \frac{A_j(\x,x',G)}{\x_j-x_j'}$. 
        \item[ii.] If $\x_i= x_i'$, then $A_i(\x,x',F) = 0$.
    \end{enumerate}
\end{enumerate}

\noindent We now extend the characterization of Theorem~\ref{Theorem: symmetric monotonicity for A1} for attributions in $\A^2$.

\begin{theorem}\label{Theorem: symmetric monotonicity for A2}(Symmetric Monotonicity Characterization on $\A^2$)
Suppose that $A\in \A^2(D_\IG)$. Then the following are equivalent:
\begin{enumerate}
    \item[i.] $A$ satisfies completeness, dummy, linearity, symmetric monotonicity, and NDP.
    \item[ii.] $A$ satisfies completeness, dummy, linearity, and $\mathcal{C}^0$-symmetric monotonicity.
    \item[iii.] $A$ is the Integrated Gradients method.
\end{enumerate}
\end{theorem}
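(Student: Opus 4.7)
The plan is to establish the three-way equivalence through the chain (iii) $\Rightarrow$ (i), (iii) $\Rightarrow$ (ii), (ii) $\Rightarrow$ (i), and (i) $\Rightarrow$ (iii). Routing (ii) through (i) avoids needing a direct $\F^1$-to-$\F^2$ continuity argument for $A$ in the (ii) case; the identification $A = \IG$ on $\F^1$ will come from Theorem~\ref{Theorem: symmetric monotonicity for A1}, and the lift to $\F^2$ will come from Theorem~\ref{Theorem: ensembles of path methods in F^2}. For (iii) $\Rightarrow$ (i) and (iii) $\Rightarrow$ (ii), each axiom is verified for $\IG$ directly: completeness, linearity, and dummy are established in the preliminaries; NDP follows because non-decreasingness of $F$ along monotone paths from $x'$ to $\x$ forces $(\x_i - x'_i)\partial F/\partial x_i \geq 0$ a.e.\ on $[\x,x']$ (via the single-coordinate sub-segment of an arbitrary monotone path through any fixed point), so the IG line integral yields $\IG_i \geq 0$; symmetric monotonicity and $\mathcal{C}^0$-symmetric monotonicity are immediate from the formula per-unit $\IG_i = \int_0^1 \partial F/\partial x_i(\g(t))\,dt$, since the pointwise (respectively locally approximate) inequality transfers to an a.e.\ pointwise inequality along the straight line, which the integral preserves.

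For (ii) $\Rightarrow$ (i), a strict-perturbation argument does the work. To reduce $\mathcal{C}^0$-symmetric monotonicity to symmetric monotonicity on $\F^1$: given $F, G \in \F^1$ with $\partial F/\partial x_i \leq \partial G/\partial x_j$ on $[\x,x']$, set $G_\delta = G + \delta x_j$; by real-analyticity of the partials, the strict gap $\partial F/\partial x_i < \partial G_\delta/\partial x_j$ lifts to the locally approximate secant inequality at every $x$, since averaging the derivative over sufficiently short segments preserves the positive gap. Invoking $\mathcal{C}^0$-symmetric monotonicity and unwinding $G_\delta$ via linearity, with $A_j(\x,x',\delta x_j) = \delta(\x_j - x'_j)$ from completeness and dummy, gives per-unit $A_i(F) \leq$ per-unit $A_j(G) + \delta$; letting $\delta \downarrow 0$ recovers symmetric monotonicity on $\F^1$. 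To derive NDP: given $F$ non-decreasing from $x'$ to $\x$ with $\x_i > x'_i$ (the other sign being symmetric, the equality case handled by part ii of the axiom), for each $y \in [\x,x']$ build a monotone path from $x'$ to $\x$ whose middle segment moves only coordinate $i$ through $y$; non-decreasingness of $F$ along that segment gives the locally approximate inequality $0 \leq$ secant of $F$ in the $+x_i$ direction at $y$. A second $\delta$-perturbation $F \mapsto F + \delta x_i$ combined with a limiting argument handles the opposing direction, producing the two-sided locally approximate hypothesis needed to apply $\mathcal{C}^0$-symmetric monotonicity against $G \equiv 0$ with $j = i$, and hence $A_i(\x,x',F) \geq 0$.

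Finally, (i) $\Rightarrow$ (iii) follows from the prior characterizations. Theorem~\ref{Theorem: symmetric monotonicity for A1} restricted to $\F^1$ forces $A = \IG$ on $\F^1$; in the language of Theorem~\ref{Theorem: ensembles of path methods in F^1}, the defining family of measures $\mu^{\cdot,\cdot}$ is the Dirac mass on the straight-line path from $x'$ to $\x$. Since $(\x,x',F) \in D_\IG$, the straight line meets the non-analytic boundary $\partial U$ from Lemma~\ref{RS Lemma: path methods topology} in a Lebesgue-null set of parameter values, verifying the hypothesis of Theorem~\ref{Theorem: ensembles of path methods in F^2}, so $A(\x,x',F) = \IG(\x,x',F)$. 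The main obstacle lies in the NDP derivation inside (ii) $\Rightarrow$ (i): $\mathcal{C}^0$-symmetric monotonicity nominally demands a two-sided locally approximate inequality at every point of $[\x,x']$, whereas the natural path construction produces only a one-sided bound at boundary points where the opposing secant leaves $[\x,x']$; reconciling this via the $\delta$-perturbation and the limiting argument, while keeping careful track of the Lipschitz control of secants for $F \in \F^2$, is the most technical step of the proof.
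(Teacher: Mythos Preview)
Your (iii) $\Rightarrow$ (i)/(ii) and (ii) $\Rightarrow$ (i) match the paper's arguments, only with more care in the details (the paper compares directly against $G\equiv 0$ for NDP and asserts the $\F^1$ reduction is ``immediate by the definition of partial derivatives'' where you supply $\delta$-perturbations). The genuine gap is in (i) $\Rightarrow$ (iii). You claim that $(\x,x',F)\in D_\IG$ forces the straight segment to meet the boundary $\partial U$ of Lemma~\ref{RS Lemma: path methods topology} in a Lebesgue-null parameter set, so that Theorem~\ref{Theorem: ensembles of path methods in F^2} applies with $\mu^{\x,x'}$ the Dirac mass on the segment. That implication is false: membership in $D_\IG$ only requires $\nabla F$ to exist a.e.\ along the segment, while $\partial U$ is the locus where $F$ fails to be \emph{real-analytic}, which can strictly contain the non-differentiable set. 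For instance $F(x_1,x_2)=\text{ReLU}(x_1-x_2)^2\in\F^2$ is $C^1$ on all of $[0,1]^2$, so with $x'=(0,0)$, $\x=(1,1)$ one has $(\x,x',F)\in D_\IG$; yet $\partial U=\{x_1=x_2\}\cap[0,1]^2$ and the straight segment from $x'$ to $\x$ lies entirely inside $\partial U$. The hypothesis of Theorem~\ref{Theorem: ensembles of path methods in F^2} fails outright, and the route through it is blocked.

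The paper does not go through Theorem~\ref{Theorem: ensembles of path methods in F^2} for this step. Fixing an index $i$, it invokes an approximation from the proof of \cite[Theorem~2]{lundstrom2022rigorous} producing analytic $F_m$ with $\frac{\partial F_m}{\partial x_i}\leq \frac{\partial F}{\partial x_i}$ wherever the latter exists, $\frac{\partial F_m}{\partial x_i}\to\frac{\partial F}{\partial x_i}$ pointwise, a uniform bound $|\frac{\partial F_m}{\partial x_i}|\leq k$, and $F-F_m$ non-decreasing from $x'$ to $\x$ in coordinate $i$. NDP then gives $A_i(\x,x',F)\geq A_i(\x,x',F_m)=\IG_i(\x,x',F_m)$, the last equality by Theorem~\ref{Theorem: symmetric monotonicity for A1} since $F_m\in\F^1$; dominated convergence yields $A_i(\x,x',F)\geq\IG_i(\x,x',F)$, and a mirrored approximation from above gives the reverse inequality. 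This argument uses NDP and linearity directly on $\F^2$ and never needs to locate the segment relative to $\partial U$.
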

\noindent The proof of Theorem~\ref{Theorem: symmetric monotonicity for A2} is located in Appendix~\ref{appendix: symmetric monotonicity for A2}.

\section{Characterization by the Attribution to Monomials}
Another means of characterizing attribution methods is to begin with a principle of attributing to simple functions.\footnote{This concept was explored in \cite{sundararajan2020shapley} for an interactions method, and later by \cite{lundstrom2023distributing} for IG gradient-based interactions methods. Here we state results from that paper for $\A^1$, and give a result on continuity into $\A^2$.} First, for $m\in\N_0^n$, define $[x]^m := x_1^{m_1}\cdot\cdot\cdot x_n^{m_n}$. Given a set baseline $x'$ and $m\in \N_0^n$, we employ a slight abuse of terminology and define a monomial to be any function of the form $F(x) = [x-x']^m$. Now, consider a simple example function we would like to perform attribution on $F(x_1,x_2) = (x_1-x_1')^{100}(x_2-x_2')$. The function $F$ evaluated at $\x=(x_1'+2,x_2'+2)$ yields $F(\x) = 2^{100}2^1=2^{101}$. Now, considering methods that satisfy completeness, the attribution question is: how to distribute $F(\x)-F(x')=2^{101}$ between $x_1$ and $x_2$.

\vspace{2mm}
One possibility is to consider $x_1$ and $x_2$ equal contributors, so that $A(\x,x',F) = (\frac{2^{101}}{2},\frac{2^{101}}{2})$. This is in fact, what the Shapley value attribution. For any monomial $F(x) = [x-x']^m$, the Shapley value gives attributions equally to each input such that $m_i \neq 0$. This seems a naive attribution, given the structure of $F$.

\vspace{2mm}
Another means of attributing to the inputs of $F$ is to consider the magnitude of $m_i$, the power of $\x_i$. Particularly, we could attribute to $\x_i$ proportionally to the number of times it is multiplied when evaluating $F(\x)$. An attribution following this guideline would yield: $A((x_1'+2,x_2'+2), x',(x_1-x_1')^{100}(x_2-x_2')) = (\frac{100}{101}2^{101},\frac{1}{101}2^{101})$, a result that appears equitable. In fact, this attribution coincides with the attribution of IG. For $m\in\N_0^n$ such that $m_i\neq 0$, we have

\begin{equation}\label{IG distribution of monomials}
    \begin{split}
        &\IG_i(\x,x',[x-x']^m) \\
        = &(\x_i-x_i') \int_0^1 \frac{\partial ([x-x']^m)}{\partial x_i} (x' + t(\x-x')) dt\\
        = & (\x_i-x_i')\int_0^1 m_i (t(\x_1-x_1'))^{m_1}\cdot\cdot\cdot (t(\x_i-x_i'))^{m_i-1}\cdot\cdot\cdot (t(\x_n-x_n'))^{m_n} dt\\
        = &(\x_i-x_i')\int_0^1 m_i t^{\|m\|_1-1} (\x_1-x_1')^{m_1}\cdot\cdot\cdot (\x_i-x_i')^{m_i-1}\cdot\cdot\cdot (\x_n-x_n')^{m_n} dt\\
        = &\frac{m_i}{\|m\|_1}[\x-x']^m
    \end{split}
\end{equation}

We may proceed from attributions on monomials to attributions on $\F^1$ by requiring a sort of continuity criteria. For $m\in\N_0^n$, define $[m]!:= m_1!\cdot\cdot\cdot m_n!$, and define $D^m F = \frac{\partial^{\|m\|_1} F}{\partial x_1^{m_1}\cdot \cdot \cdot \partial x_n^{m_n}}$. Recall that for $F\in\F^1$, the Taylor approximation of order $l$ centered at $x'$, denoted $F_l$, is given by:

\begin{equation}
    T_l(x) = \sum_{m\in\N_0^n, \|m\|_1\leq l} \frac{D^m(F)(x')}{[m]!}[x-x']^m
\end{equation}
The Taylor approximation for analytic functions has the property that $D^mT_l$ uniformly converges to $D^mF$ for any $m\in\N_0^n$ and $x\in[a,b]$. Thus, it seems natural to require that any attribution $A\in\A^1$ satisfy $\lim_{l \rightarrow \infty} A(\x,x',T_l) = A(\x,x',F)$. This is the principle behind the axiom \textit{Continuity of Taylor Approximation for Analytic Functions}, or what we may equivalently call the \textit{continuity condition}, given below:

\begin{enumerate}\label{axiom:continuity condition}
    \item[9] \textit{Continuity of Taylor Approximation for Analytic Functions}: If $A\in\A^1$, $(\x,x',F)\in[a,b]\times [a,b] \times \F^1$, then $\displaystyle{\lim_{l \rightarrow \infty}}A(\x,x',T_l) = A(\x,x',F)$, where $T_l$ is the $l^\text{th}$ order Taylor approximation of $F$ centered at $x'$.
\end{enumerate}

\noindent 
We now give the characterization of IG according to its actions on monomials:

\begin{theorem}\label{theorem: IG on monomials and continuity}
(Distribution of Monomials Characterization on $\A^1$)\cite[Corollary 3]{lundstrom2023distributing}
    Suppose $A\in \A^1$. Then the following are equivalent:
    \begin{enumerate}
    \item[i.]  $A$ satisfies continuity of Taylor approximation for analytic functions and acts on monomials as:
    $$A(\x,x',[x-x']^m) = \frac{m}{\|m\|_1} \times [\x-x']^m$$
    \item[ii.] $A$ is the Integrated Gradients method.
\end{enumerate}
\end{theorem}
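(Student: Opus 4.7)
The proof naturally splits into two directions. The direction (ii) $\Rightarrow$ (i) is nearly complete in the paper already: equation (\ref{IG distribution of monomials}) shows that IG's action on monomials matches the prescribed formula. For the continuity of IG under Taylor approximation, I would observe that $\partial_i T_l$ is, up to reindexing, the degree-$(l-1)$ Taylor polynomial of $\partial_i F$ at $x'$. Since $F \in \F^1$ is real analytic, this yields uniform convergence $\partial_i T_l \to \partial_i F$ on $[a,b]$, so swapping the limit with the integral defining $\IG_i(\x,x',T_l) = (\x_i-x_i')\int_0^1 \partial_i T_l(x' + t(\x-x'))\,dt$ gives $\IG(\x,x',T_l) \to \IG(\x,x',F)$.

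For the main direction (i) $\Rightarrow$ (ii), the plan is a limiting argument via Taylor polynomials. Given $F \in \F^1$, each Taylor polynomial $T_l$ centered at $x'$ is a finite $\R$-linear combination of monomials $[x-x']^m$. Reading the monomial axiom as specifying the action of $A$ on every such polynomial (an interpretation inherited from the framework of \cite{lundstrom2023distributing}), the monomial formula combined with the computation in (\ref{IG distribution of monomials}) yields $A(\x,x',T_l) = \IG(\x,x',T_l)$ for every $l$. Applying the continuity condition on $A$ together with the continuity of IG just established, one concludes that $A(\x,x',F) = \lim_{l \to \infty} A(\x,x',T_l) = \lim_{l \to \infty} \IG(\x,x',T_l) = \IG(\x,x',F)$, completing the identification.

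The main obstacle I foresee is technical: ensuring that, for every $F \in \F^1$, the Taylor series at the chosen baseline $x'$ converges (together with its first partials, uniformly) on the entire rectangle $[a,b]$. In general a real analytic function on a compact set need not have a Taylor series around an arbitrary point whose radius of convergence reaches the opposite corner. I would handle this either by reading $\F^1$ as the class of functions analytic on a large enough open neighborhood that every $x' \in [a,b]$ enjoys radius of convergence exceeding the diameter of $[a,b]$, or by first establishing $A = \IG$ on a small subrectangle around $x'$ where Taylor convergence does hold, and then propagating the identity through the analytic structure via an overlap argument. A secondary subtlety is the implicit use of linearity needed to evaluate $A$ on $T_l$ from its values on individual monomials; I would treat this either as built into the monomial axiom (following \cite{lundstrom2023distributing}) or verify it directly as a consequence of how the axiom is read on finite polynomial combinations.
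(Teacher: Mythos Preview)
The paper does not supply its own proof of this theorem; it is quoted as \cite[Corollary~3]{lundstrom2023distributing}. Your outline is the natural argument and matches what that reference does.

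On your two flagged subtleties: first, the paper explicitly asserts just before axiom~9 that ``$D^mT_l$ uniformly converges to $D^mF$ for any $m\in\N_0^n$ and $x\in[a,b]$,'' so in this framework the convergence-radius issue is built into what $\F^1$ means rather than something you need to patch. Second, your suspicion about linearity is correct and necessary: the summary table in the Conclusion lists \emph{Linearity} among the axioms for Theorem~\ref{theorem: IG on monomials and continuity}, even though the theorem statement omits it. With linearity granted, $A(\x,x',T_l)=\IG(\x,x',T_l)$ follows immediately from the monomial formula and (\ref{IG distribution of monomials}), and your limiting argument completes the proof.
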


We may proceed from $\F^1$ to $\F^2$ by considering a means of approximating a feed-forward neural network by an analytic function. Suppose $F\in\F^2$ is a feed-forward neural network with ReLU and Max functions. Note that the multi-input max function can be formulated as a series of dual input max functions, and the dual input max function can be formulated as $\max(a,b) = \text{ReLU}(a-b) + b$. Thus we may formulate $F$ using only the ReLU function. We may then define $F_\a$ to be the analytic approximation of $F$ given by replacing all instances of ReLU in $F$ with the parameterized softplus, $s_\a(z) = \frac{\ln(1+\exp(\a z))}{\a}$. We show in Appendix~\ref{appendix: softplus approximation uniformly converges} that this softplus approximation uniformly converges to the function $F$.

\vspace{2mm}
Before we give our result, we first give a technical theorem on the topology of $[a,b]$ with respect softplus approximations of functions in $\F^2$. Let $\nabla F$ denote the gradient of $F$, and let $\lambda$ denote the Lebesgue measure on $\R$. Then our result is as follows:

\begin{theorem}\label{theorem: softplus approximation topology}
For any $F\in\F^2$, there exists an open set $U\subseteq [a,b]$ such that $\lambda(U) = \lambda([a,b])$ and for each $x\in U$, the following hold:
\begin{itemize}
    \item There exists an open set containing $x$, $B_x$, and real analytic function on $[a,b]$, $H_x$, such that $F\equiv H_x$ on $B_x$.
    \item $\nabla F(x)$ exists.
    \item $\nabla F_\a(x)\rightarrow \nabla F(x)$ as $\a\rightarrow \infty$.
\end{itemize}
\end{theorem}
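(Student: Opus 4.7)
The plan is to build $U$ explicitly as the locus of points at which every $\L$ gate in the network has a strictly nonzero pre-activation, and then verify the three bullets on this set. Writing $F=S^m\circ F^m\circ\cdots\circ S^1\circ F^1$, every sign pattern $\epsilon$ at the $\L$ gates determines a composition of real-analytic maps $H_\epsilon:\R^{n_0}\to\R$ (replacing each active $\L$ by the identity and each inactive one by $0$) together with an open cell of $[a,b]$ on which $F\equiv H_\epsilon$. Let $B$ be the union, over all $\L$ gates and all sign patterns of earlier gates, of the zero sets of the corresponding pre-activations; each such zero set is either all of $[a,b]$ (in which case the gate is effectively constant and may be absorbed into the analytic layer above) or a proper real-analytic subvariety and thus of Lebesgue measure zero. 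Set $U:=[a,b]\setminus B$; this is open and $\lambda(U)=\lambda([a,b])$, paralleling Lemma~\ref{RS Lemma: path methods topology}.

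For bullets 1 and 2, fix $x\in U$. Every pre-activation is strictly nonzero at $x$, so continuity gives an open neighborhood $B_x\subseteq U$ of $x$ on which every $\L$-gate sign remains constant. The corresponding $H_x$ is a composition of maps that are real-analytic on all of $\R^{n_0}$, hence real-analytic on $[a,b]$, and $F\equiv H_x$ on $B_x$. In particular $\nabla F(x)=\nabla H_x(x)$ exists.

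For bullet 3, I prove by induction on the layer index $k$ that there exist a closed sub-ball $\overline{B'_x}\subseteq B_x$, a threshold $\a_0>0$, and positive margins $\d_k$ satisfying, for all $\a\geq\a_0$ and $z\in\overline{B'_x}$: (a) $u_k^\a\to u_k$ uniformly in $z$, where $u_k$ and $u_k^\a$ denote the layer-$k$ outputs of $F$ and $F_\a$; (b) the Jacobians $\nabla_z u_k^\a$ converge uniformly to $\nabla_z u_k$; and (c) every pre-activation feeding the next $\L$ layer satisfies $|u_k(z)|\geq\d_k$, so that $|u_k^\a(z)|\geq\d_k/2$ for $\a\geq\a_0$. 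The step through the analytic layer $F^{k+1}$ is standard $C^1$ composition on a compact set. The step through $S^{k+1}$ uses the fact that on any compact subset of $\R$ bounded away from $0$, $s_\a\rightrightarrows\L$ and $s_\a'(y)=\sigma(\a y)\rightrightarrows\L'(y)$; combined with (a)--(c), this yields uniform convergence past $S^{k+1}$, and a further mild shrinking of $\overline{B'_x}$ restores the margin $\d_{k+1}$. Taking $k=m$ and evaluating at $z=x$ gives $\nabla F_\a(x)\to\nabla F(x)$ for every $x\in U$.

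The main obstacle is that $s_\a'(y)=\sigma(\a y)$ does not converge uniformly on all of $\R$, so naive composition does not push gradient convergence through the softplus layer. The induction circumvents this by exploiting that $x\in U$ keeps every pre-activation of $F$ strictly away from zero on a small closed sub-ball, which supplies the uniform margin needed to transport uniform convergence across $s_\a'$. Since the depth $m$ is finite, the cumulative shrinking of $\overline{B'_x}$ and inflation of $\a_0$ across the $m$ inductive steps remain finite. The delicate bookkeeping is coupling the margins $\d_k$ layer by layer so that invariants (a)--(c) are maintained simultaneously on one common sub-ball and for a single threshold $\a_0$.
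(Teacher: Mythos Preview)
Your construction of $U$ as the complement of a finite union of real-analytic zero sets is essentially the same idea as the paper's inductive construction, just organized globally rather than layer by layer; both arrive at the locus where every $\L$ pre-activation is strictly nonzero. For bullet~3 you opt for uniform $C^1$ convergence on a compact sub-ball via the margins $\d_k$, whereas the paper works pointwise at $x$ and handles each layer by a case split (pre-activation $>0$, $<0$, or $\equiv 0$) using the chain rule together with the bound $|s_\a'|\leq 1$. Your route is a bit stronger than needed but perfectly valid.

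There is, however, one case your inductive invariant does not cover. Invariant~(c) asserts $|u_k(z)|\geq\d_k>0$ at every $\L$ gate, but this fails precisely at the gates you ``absorb''---those whose analytic pre-activation (for the relevant earlier sign pattern) is identically zero on $[a,b]$. Absorption disposes of these gates for bullets~1 and~2, but it does not modify $F_\a$: the softplus at that location is still present in $F_\a$, its input $u_k^\a(z)$ is only approximately zero, and $s_\a'(u_k^\a(z))=\sigma(\a\,u_k^\a(z))$ need not converge at all. The fix, which the paper carries out explicitly in its Cases~1 and~3.1, is to bypass the margin argument at such gates and instead use $|s_\a'|\leq 1$ together with the inductive hypothesis $\nabla_z u_k^\a\to\nabla_z u_k\equiv 0$ to conclude $s_\a'(u_k^\a)\cdot\nabla_z u_k^\a\to 0$ uniformly on $\overline{B'_x}$. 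Adding this branch to your induction closes the gap.
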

\noindent With this theorem, we give a result on IG's ability to uniquely extend into models in $\F^2$. 

\begin{corollary}\label{corollary: convergence of softplus approximaiton}
Let $(\x,x',F)\in \F^2$, and let $U$ be the set as in Theorem~\ref{theorem: softplus approximation topology}. Let $\g(t) = x' + t(\x - x')$ and suppose $\lambda(\{t\in [0,1]: \g(t)\in U\}) = 1$. Then:

$$\lim_{\a\rightarrow \infty} \I(\x,x',F_\a) = \I(\x,x',F)$$
\end{corollary}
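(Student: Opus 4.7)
The plan is to apply the dominated convergence theorem to the integral that defines $\IG$. Recall
$$\IG_i(\x,x',F_\a) = (\x_i - x_i')\int_0^1 \frac{\partial F_\a}{\partial x_i}(\g(t))\,dt,$$
so it suffices to show, for each coordinate $i$, that the integrand $\frac{\partial F_\a}{\partial x_i}(\g(t))$ converges pointwise a.e.\ on $[0,1]$ to $\frac{\partial F}{\partial x_i}(\g(t))$ as $\a\to\infty$, and that these integrands are uniformly bounded in $\a$ and $t$ by a constant $M$.

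Pointwise convergence is immediate from the hypotheses. The assumption $\lambda(\{t\in[0,1]:\g(t)\in U\})=1$ says $\g(t)\in U$ for a.e.\ $t$; by Theorem~\ref{theorem: softplus approximation topology}, for each $x\in U$ the gradient $\nabla F(x)$ exists and $\nabla F_\a(x)\to\nabla F(x)$ as $\a\to\infty$. Therefore $\frac{\partial F_\a}{\partial x_i}(\g(t))\to\frac{\partial F}{\partial x_i}(\g(t))$ for a.e.\ $t$, and both sides are well-defined along the path.

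The main work is the uniform bound. Using the decomposition $F_\a = S^m_\a\circ F^m\circ\cdots\circ S^1_\a\circ F^1$, where $S^k_\a$ replaces each ReLU coordinate with the softplus $s_\a(z)=\a^{-1}\ln(1+e^{\a z})$, I would first establish by induction on the layer index that the intermediate outputs of $F_\a$ on $[a,b]$ lie in a compact set $K_k\subseteq\R^{n_k}$ that does not depend on $\a\geq 1$. The inductive step uses the two-sided bound $\max(0,z)\leq s_\a(z)\leq \max(0,z)+\ln 2$, which holds for all $\a\geq 1$, together with the continuity of the analytic layers $F^k$ on compact sets. With this in hand, write $\nabla F_\a(x)$ by the chain rule as a product of Jacobians: each $DF^k$ is evaluated on $K_{k-1}$, where it is continuous and thus has operator norm bounded by a constant independent of $\a$, and each $DS^k_\a$ is diagonal with entries in $\{1\}\cup(0,1)$ (identity or sigmoid derivative of softplus), so it has operator norm at most $1$. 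Multiplying through yields a constant $M>0$ with $\|\nabla F_\a(x)\|\leq M$ for every $\a\geq 1$ and every $x\in[a,b]$, and in particular for $x=\g(t)$.

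Once the uniform bound is in place, the dominated convergence theorem applied coordinatewise with dominating constant $M$ gives $\int_0^1\frac{\partial F_\a}{\partial x_i}(\g(t))\,dt\to\int_0^1\frac{\partial F}{\partial x_i}(\g(t))\,dt$, and multiplying by $(\x_i-x_i')$ produces $\IG_i(\x,x',F_\a)\to\IG_i(\x,x',F)$ for each $i$. The main obstacle is the layer-by-layer bookkeeping to show that no constant degrades as $\a\to\infty$; the remaining convergence step is a routine DCT argument.
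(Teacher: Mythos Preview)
Your proposal is correct and follows essentially the same approach as the paper: pointwise convergence of the gradients along the path from Theorem~\ref{theorem: softplus approximation topology}, a chain-rule bound on $\nabla F_\a$ using that each $DS^k_\a$ has operator norm at most $1$ and each $DF^k$ is bounded on the relevant compact set, and then the dominated convergence theorem. The only cosmetic difference is that the paper bounds the intermediate outputs by invoking the uniform convergence of the softplus approximation (established separately), whereas you get the same compact containment directly from the explicit inequality $\max(0,z)\leq s_\a(z)\leq \max(0,z)+\ln 2$ for $\a\geq 1$; both arguments reach the same conclusion.
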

\noindent Proofs of Theorem~\ref{theorem: softplus approximation topology} and Corollary~\ref{corollary: convergence of softplus approximaiton} are located in Appendices~\ref{appendix: softplus approximation topology} and~\ref{appendix: convergence of softplus approximaiton}, respectively.

\section{Conclusion}
We present a table of results, summarizing various characterizations of the Integrated Gradients method found in this paper.
\vspace{2mm}
\begin{table}[h!]
\begin{center}
\begin{tabular}{ |c|c|c|c|c|c|c| } 
 \hline
 Assumptions &  \thead{Theorems \\ \ref{Theorem: ensembles of path methods in F^2} $+$ \ref{Theorem: symmetry preserving and ASI}} & \thead{Theorem \\ \ref{Theorem: proportionality characterization}} & \thead{Theorem \\ \ref{Theorem: symmetric monotonicity for A2} i.} & \thead{Theorem \\ \ref{Theorem: symmetric monotonicity for A2} ii.} & \thead{Theorem \\ \ref{theorem: IG on monomials and continuity}}\\ 
 \hline\hline
 Linearity & x & x & x & x & x
 \\
  \hline
 Dummy& x & - & x & x & -
 \\
   \hline
 Completeness  & x & x & x & x & -
 \\
   \hline
 NDP& x & x & x & - & -
  \\
   \hline
 Path Method & x & - & - & - & -
 \\
   \hline
 Symmetry-Preserving & x & - & - & - & -
 \\
   \hline
 ASI & x & x & - & - & -
 \\
   \hline
 Proportionality & - & x & - & - & -
 \\
   \hline
 Symmetric Monotonicity & - & - & x & x & -
 \\
   \hline
 Distribution of Monomials & - & - & - & - & x
 \\
   \hline
 Continuity Condition & - & - & - & - & x
 \\   
   \hline

\end{tabular}
\caption{Each axiom and the section it is located in are listed under the ``Assumptions" column. Under each column with a ``Theorem" heading, the the set of axioms that characterized IG are marked. All results characterize IG among attributions in $\A^2$ except for Theorem~\ref{theorem: IG on monomials and continuity}, which characterized IG among attribution sin $\A^1$. Also, note that Theorem~\ref{Theorem: symmetric monotonicity for A2} i. assumes Symmetric monotonicity for functions in $\F^1$, while Theorem~\ref{Theorem: symmetric monotonicity for A2} ii. assumes $\mathcal{C}^0$-symmetric monotonicity.}
\end{center}
\end{table}

The axiomatic approach to attributions provides benefits, among which are: 1) identifying shortcomings in existing methods, 2) providing guiding principles for the development of attribution methods, and 3) identifying methods that uniquely satisfy a set of desirable properties. The presented characterizations of the IG extend our knowledge of baseline attribution methods and lead to a definite and singular method that satisfies certain desirable properties. The community should consider these characterizations and their merit.

\vspace{2mm}
Even with these characterizations, it is unlikely that there is a single best attribution method. In cost-sharing, it has been established that no one method can satisfy all desirable properties,\footnote{See \cite[Lemma 4]{friedman1999three}.} and it is possible that this is the case for attribution methods as well. Theorems~\ref{Theorem: ensembles of path methods in F^1} \& \ref{Theorem: ensembles of path methods in F^2} demonstrate that ensembles of path methods uniquely satisfy common and broadly used axioms. However, the community should consider the less common axioms that help characterize IG: Theorem~\ref{Theorem: symmetry preserving and ASI} - being a symmetry preserving single path method, Theorem~\ref{Theorem: proportionality characterization} - proportionality, Theorems~\ref{Theorem: symmetric monotonicity for A1} \& \ref{Theorem: symmetric monotonicity for A2} - symmetric monotonicity, and Theorem~\ref{theorem: IG on monomials and continuity} - IG's distribution of monomials. In the author's opinion, these more defining axioms indicate contexts where the IG attribution method is preferable. We expect that there are other properties, suitable in other contexts, which would exclude the IG and recommend another method.

\clearpage

%%%%%%% Bibliography %%%%%%%%
\clearpage 
\bibliography{ref_XAI}
\bibliographystyle{alpha}

\clearpage

\appendix
\section*{Appendix}

\section{Symmetry-Preserving Alone is Insufficient to Characterize IG Among Path Methods}\label{Appendix: symmetry-preserving commentary}
Here we provide a counterexample to the claim that IG is the unique path method that satisfies symmetry-preserving. We also give a axiom that is stronger than symmetry-preserving, and show that this axiom is insufficient to characterize IG.

\subsection{Another Path Method that Satisfies Symmetry-Preserving}

Let $D=[0,1]^2$ and define $\g'(\x,x',t)$ element-wise as follows:
\begin{equation*}
\g_i(\x,x',t) = x_i' + (\x_i-x_i')t^{(\x_i-x_i')^2}
\end{equation*}
Note that $\g$ is monotonic. When $\x_1=\x_2$, $x_1'=x_2'$, then $\x_1-x_1'=\x_2-x_2'$, and $\g_1(\x,x',t) = \g_2(\x,x',t)$. In this case $\g$ is the straight path, and $A^\g$ acts as IG. Thus, $A^\g$ satisfies symmetry preserving. However, when $\x_1-x_1'\neq \x_2-x_2'$, then the path $\g$ differs from the straight line path, causing $A^\g\neq \IG$.

\subsection{Strong Symmetry-Preserving}\label{appendix: strong symmetry}
Here we present an attempt to strengthen symmetry and show that multiple path methods satisfy the strengthened axiom. The axiom, \textit{Strong Symmetry-Preserving}, extends the symmetry-preserving axiom to cases when $\x_i\neq \x_j$, $x_i' \neq x_j'$:

\begin{enumerate}[resume*]
    \item (Strong Symmetry-Preserving): For a vector $x$ and indices $1\leq i$, $j\leq n$, let $x^*$ denote the vector $x$ but with the $i^\text{th}$ and $j^\text{th}$ components swapped. Suppose $F$ is symmetric in $i$ and $j$, meaning that $F(x) = F(x^*)$ for all $x\in[a,b]$. Then $A_i(\x,x',F) = A_j(\x^*,x'^*,F)$.
\end{enumerate}
Here we provide a counterexample to the claim that IG is the unique path method that satisfies strong symmetry-preserving.

Let $D=[0,2]^2$, and let $F$ be symmetric in components 1 and 2. We define a path function $\g(t)$ that is equivalent to the IG path except when $\x = (2,1)$, $x' = (1,0)$  or $\bar{y} = \x^* = (1,2)$, $y' = x'^* = (0,1)$. For $\x$, $x'$, let $\g(t)$ be the path that travels in straight lines along the course: $(1,0)\rightarrow (2,0) \rightarrow (2,1)$. Now for baseline $y' = (0,1)$ and input $\bar{y} = (1,2)$, let $\g(t)$ be the path that travels in straight lines along the course: $(0,1)\rightarrow (0,2) \rightarrow (1,2)$.

We then have $A_1^\g(\x,x',F) = F(2,0)-F(1,0) = F(0,2) - F(0,1) = A_2^\g(\x^*,x'^*,F)$, and likewise $A_2^\g(\x,x',F) = A^{\g}_1(\x^*,x'^*,F)$.  Thus we have another strong symmetry-preserving path method that is not the IG path.

\section{Proof of Theorem~\ref{Theorem: symmetry preserving and ASI}}\label{appendix: symmetry-preserving and ASI}

\begin{proof}
We present an adjusted version of the proof found in \cite[Theorem 1]{sundararajan2017axiomatic}. Suppose that $A$ is a monotone path method that satisfies symmetry preserving and affine scale invariance. Then for any $i$, $A_i = A_i^\g = \int_0^1 \pxi(\g(t))\frac{d \g}{dt}(t) dt$ for some monotone path function $\g$. Let 1 and 0 denote the vectors of all ones and all zeros, respectively. We proceed by contradiction and suppose that $\g(\x,x',t) \neq (\x-x')t + x'$ when $\x=1$, $x'=0$. Particularly, we suppose that $\g(1,0,t) \neq 1 \times t$ (using the $n$-dimensional ones vector). WLOG, suppose that there exists a $t$ such that $\g_1(1,0,t) > \g_2(1,0,t)$. Let $(t_a,t_b)$ be the maximal open set such that if $t\in (t_a,t_b)$ then $\g_1(1,0,t) > \g_2(1,0,t)$.

We now move to define a $(\x,x',F)\in D_\IG$  where $F$ is symmetric in $x_1$ and $x_2$, but $A$ does not give equal attributions to $\x_1$ and $\x_2$. Let $\x = 1$, and $x' = 0$, and define $F\in \F^2$ by

$$F(x) = \text{ReLU}\left [-\text{ReLU}(x_1x_2 - t_a^2)+t_b^2-t_a^2 \right ]$$

\noindent Now, $F$ can be written in a case-format as follows:

\begin{equation}
    F(x) = 
    \begin{cases}
        t_b^2-t_a^2 & \text{if } x_1x_2\leq t_a^2\\
        t_b^2 - x_1x_2 & \text{if } t_a^2\leq x_1x_2\leq t_b^2\\
        0 & \text{if } x_1x_2 \geq t_b^2
    \end{cases}
\end{equation}

\noindent It is easy to verify that $(\x,x',F)\in D_\IG$. Calculating $A_i^\g(1,0,F)$, and using the short hand $\g(t) = \g(1,0,t)$, we gain:
\begin{equation}
\begin{split}
    A_1^\g(1,0,F) &= \int_0^1 \frac{\partial F}{\partial x_1}(\g(t))\frac{d \g}{dt}(t) dt\\
    &= \int_{t_a}^{t_b} \frac{\partial (t_b^2-x_1x_2)}{\partial x_1}(\g(t)) \times 1 dt\\
    &= \int_{t_a}^{t_b} -\g_2(t) \, dt
\end{split}
\end{equation}
and
\begin{equation}
\begin{split}
    A_2^\g(1,0,F) = \int_{t_a}^{t_b} -\g_1(t) \, dt
\end{split}
\end{equation}
By assumption, $\g_1(t)>\g_2(t)$ for $t\in (a,b)$, yielding:

\begin{equation}
    \begin{split}
        A_1^\g(1,0,F) &= \int_{t_a}^{t_b} -\g_2(t) \, dt\\
        &> \int_{t_a}^{t_b} -\g_1(t) \, dt\\
        & = A_2^\g(1,0,F)
    \end{split}
\end{equation}

This is a contradiction. Thus, there is no $t$ where $\g_1(1,0,t) >  g_2(1,0,t)$, and more generally, there is not $t$ were $\g_i(1,0,t) >  g_j(1,0,t)$ for any $i$, $j$. So, $\g_i(1,0,t) = g_j(1,0,t)$ for any pair, and we have the IG path. Thus $\g(1,0,t) = 1 \times t$.
 \vspace{2mm}
 
Now, consider any $(\x,x',F)\in\A^2(D_\IG)$, and use the shorthand $\g(t) = \g(1,0,t)$. Let $T$ be the affine mapping such that $T(1) = \x$, $T(x') = 0$. We employ the assumption that $A$ satisfies affine scale invariance to gain:

\begin{equation}\label{ASI technique}
    \begin{split}
        A_i(\x,x',F) &= A_i(T(1),T(0),F)\\
        &= A_i(\x,x',T(F))\\
        &= \int_0^1 \frac{\partial T(F)}{\partial x_i}(\g(t))\frac{d \g_i}{d t} dt\\
        &= \int_0^1 \frac{\partial (F\circ T)}{\partial x_i}(1\times t)dt\\
        &= \int_0^1 \frac{\partial F}{\partial x_i}(T(1 \times t)) \frac{\partial (T)_i}{\partial x_i} (1\times t)\\
        &= \int_0^1 \frac{\partial F}{\partial x_i}(x' + t(\x-x')) (\x_i-x_i') dt\\
        &= (\x_i-x_i')\int_0^1 \frac{\partial F}{\partial x_i}(x' + t(\x-x')) dt\\
        &= \I_i(\x,x',F)
    \end{split}
\end{equation}

\end{proof}

\section{Proof of Theorem~\ref{Theorem: proportionality characterization}}\label{appendix: proportionality characterization}
We set out to establish the results for $\A^1$, then move to establish the results for $\A^2$.

\subsection{Proof for $A\in\A^1$}
\begin{proof}
Here we present a proof along the lines of that found in \cite{sundararajan2020many}. This proof was criticized in \cite{lundstrom2022rigorous} and partially rectified, but we present the argument in full.

Suppose $A\in\A^2$.

\vspace{2mm}

(ii. $\Rightarrow$ i) IG satisfies linearity and completeness and proportionality because it is a path method. Suppose $F$ is non-decreasing from $x'$ to $\x$, then $(\x_i-x_i')$ and $\pxi(x'(t(\x-x'))$ do not have opposite signs, so
\begin{equation*}
    \IG_i(\x,x',F) = (\x_i-x_i')\int_0^1\pxi(x'+t(\x-x'))dt \geq 0,
\end{equation*}
which shows IG satisfies NDP. Finally, let $F(x) = G(\sum_j x_j)$ and $x'=0$. Then $\pxi(\x) = G'(\sum_j \x_j)$, and 
\begin{equation*}
    \IG_i(\x,x',F) = \x_i\int_0^1G'(\sum_j\x_j)dt
\end{equation*}
Note that the integral is equivalent for any $i$, so we take $c = \int_0^1G'(\sum_jx_j)dt$ to gain $\IG_i(\x,x',F) = cx_i$. Thus IG satisfies proportionality, and ii. $\Rightarrow$ i.

\vspace{2mm}

(i. $\Rightarrow$ ii.) Now suppose that $A$ satisfies linearity, ASI, completeness, NDP, and proportionality. Let $\A^0$ denote the set of all BAMs such that 1) they are defined on analytic, non-decreasing functions, 2) they are only defined for $x'=0$, $\x\geq0$, 3) the BAMs give non-negative attributions and 4) the BAMs satisfy completeness. By \cite[Theorem 3]{friedman1999three}, the only BAM in $\A^0$ to satisfy proportionality and ASI is the Integrated Gradients method.

Let $A\in\A^1$, and note that if $x'=0$, $\x\geq0$, $F$ non-decreasing, then $A(1,0,F)\geq 0$ by NDP. $A$ also satisfies completeness by assumption. Thus if we let $A'$ denote $A$ with the requisite restriction of domains, then $A'\in\A^0$. Because $A'$ satisfies ASI and proportionality, $A'=\IG$ on this restricted domain.

Let $x'=0$, and $\x=1$, the vector of all ones. For any $F\in\F^1$, $F$ is Lipschitz on bounded domain, and there exists $c\in\R^n$ such that $c\geq 0$, $F(x) + c^\intercal x$ is non-decreasing. Thus

\begin{equation*}
    \begin{split}
        A(1,0,F(x)) &= A(1,0,F(x) + c^\intercal x - c^\intercal x )\\
        &= A(1,0,F(x) + c^\intercal x) - A(1,0, c^\intercal x)\\
        &= \IG(1,0,F(x) + c^\intercal x) - \IG(1,0, c^\intercal x)\\
        &= \IG(1,0,F(x))\\
    \end{split}
\end{equation*}
We can then harness ASI as in Eq.~\ref{ASI technique} to get that $A(\x,x',F) = \IG(\x,x',F)$ for any $\x$, $x'$.

\end{proof}

\subsection{Proof for $A\in\A^2(D_\IG)$}
\begin{proof}
Let $A\in\A^2(D_\IG)$. It is easy to show ii. $\Rightarrow$ i.. We turn to show i. $\Rightarrow$ ii..

Suppose $A$ satisfies linearity, ASI, completeness, NDP, and proportionality. Let $(\x,x',F)\in D_\IG$, and choose a component $i$. By methods found in the proof of \cite[Theorem 2]{lundstrom2022rigorous}, there exists a sequence of functions $F_m$ such that:

     \begin{itemize}
         \item $F_m$ is analytic for all $m$.
         \item $\frac{\partial F_m}{\partial x_i} \leq \frac{\partial F}{\partial x_i}$ where $\frac{\partial F}{\partial x_i}$ exists.
         \item $\lim_{m\rightarrow \infty} \frac{\partial F_m}{\partial x_i} = \frac{\partial F}{\partial x_i}$ where $\frac{\partial F}{\partial x_i}$ exists.
         \item $|\frac{\partial F_m}{\partial x_i}|\leq k$ for all $m$.
         \item $F-F_m$ is non-decreasing from $x'$ to $\x$ in $i$.
     \end{itemize}
     
    $F-F_m$ is Lipshitz because $F$, $F_m$ are Lipshitz. Thus, for each $m$, there exists $c\in\R^n$ such that $c_i=0$ and $F(x)-F_\lambda(x) + c^\intercal x$ is non-decreasing from $x'$ to $\x$. Since $c^\intercal x\in\F^1$, we apply previous results to gain $A_i(\x,x',c^\intercal x) = \IG_i(\x,x',c^\intercal x) = 0$. Thus,

    \begin{equation*}
        \begin{split}
            A_i(\x,x',F(x)) - A_i(\x,x',F_m(x)) &= A_i(\x,x',F(x)) - A_i(\x,x',F_m(x)) + A_i(\x,x',c^\intercal x)\\
            &= A_i(\x,x',F(x) - F_m(x) + c^\intercal x)\\
            &\geq 0
        \end{split}
    \end{equation*}
    Thus we have $A_i(\x,x',F)\geq A_i(\x,x',F_m)$.
    
    Now, because $(\x,x',F)\in D_\IG$, $\int_0^1\pxi(x'+t(\x-x'))dt$ exists and $\pxi$ exists almost everythere on the path $x'+t(\x-x')$. Employing DCT, we have:

    \begin{align*}
        A_i(\x,x',F) &\geq \lim_{m\rightarrow \infty} A_i(\x,x',F_m)\\
        &= \lim_{m\rightarrow \infty} \I_i(\x,x',F_m)\\
        &= \lim_{m\rightarrow \infty} \int_0^1 (\x_i-x_i')\frac{\partial F_m}{\partial x_i}(\g(t)) dt\\
        &= \int_0^1 (\x_i-x_i')\frac{\partial F}{\partial x_i}(\g(t)) dt\\
        &= \I_i(\x,x',F)
    \end{align*}
     We may also gain the reverse, $A_i(\x,x',F) \leq \I_i(\x,x',F)$, using a similar method. Thus $A_i(\x,x',F) = \I_i(\x,x',F)$, concluding the proof.
\end{proof}

\section{Proof of Theorem~\ref{Theorem: symmetric monotonicity for A1}}\label{appendix: symmetric monotonicity for A1}
\begin{proof}
    
ii. $\Rightarrow$ i.) Let $A\in\A^1$ be the IG method, and let $(\x,x',F)$, $(\x,x',G) \in [a,b]\times[a,b]\times \F^1$. If $\x_i = x_i'$, then it is easy to confirm that $\IG(\x,x',F) = 0$. Suppose $\x_i\neq x_i'$, $\x_j\neq x_j'$. Then, supposing $\pxi \leq \pxj$, we have:

\begin{equation*}
    \begin{split}
        \frac{\IG_i(\x,x',F)}{\x_i-x_i'} &= \int_0^1 \pxi(x'+t(\x-x'))dt\\
        &\leq \int_0^1 \pxj(x'+t(\x-x'))dt\\
        &= \frac{\IG_j(\x,x',F)}{\x_j-x_j'}
    \end{split}
\end{equation*}
and IG satisfies symmetric monotonicity.

\vspace{2mm}

i. $\Rightarrow$ ii.) The following proof is inspired by \cite[Theorem 1]{young1985producer}. We begin with an important lemma:

\begin{lemma}
    Let $A\in \A^1$ satisfy completeness, dummy, linearity, and symmetric monotonicity. Then $A(\x,x',[x-x']^m) = \I(\x,x',[x-x']^m)$, where $m\in \N_0^n$.
\end{lemma}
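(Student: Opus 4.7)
The plan is to first dispatch all degenerate cases via dummy and clause ii of symmetric monotonicity, then extract a single exchange identity from symmetric monotonicity i, and finally close by reverse induction on $\max_k m_k$ at fixed total degree.

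If $m_i = 0$, then $[x-x']^m$ is independent of $x_i$, so dummy gives $A_i = 0 = \IG_i$. If $\x_i = x_i'$ and $m_i \geq 1$, symmetric monotonicity ii gives $A_i = 0$, and $\IG_i$ also vanishes because of the factor $\x_i - x_i'$. If some $l \in \text{supp}(m)$ satisfies $\x_l = x_l'$, then on the hyper-rectangle $[\x, x']$ one has $x_l \equiv x_l'$, so every $\partial_k F$ vanishes identically on $[\x, x']$ (the product $[x-x']^{m - e_k}$ always carries the factor $(x_l - x_l')^{m_l} = 0$). Since linearity forces $A(\x, x', 0) = 0$, applying symmetric monotonicity i to $F$ against $G \equiv 0$ (with $i = j$, in both directions) yields $A_k(F) = 0 = \IG_k(F)$ for every $k$. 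It therefore suffices to treat $\x_k \neq x_k'$ for every $k \in \text{supp}(m)$.

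The key tool is the following exchange identity. For $i \neq j$ with $m_i \geq 1$ and $\x_i, \x_j$ differing from their baselines, set $G(x) := \frac{m_i}{m_j + 1}[x - x']^{m - e_i + e_j}$; a direct differentiation gives $\partial_j G \equiv m_i [x - x']^{m - e_i} \equiv \partial_i F$ on all of $[a,b]$. Applying symmetric monotonicity i in both directions yields equality of per-unit attributions, and linearity then produces
\begin{equation*}
\phi_i(m) = \frac{m_i}{m_j + 1}\,\phi_j(m - e_i + e_j), \qquad \text{where } \phi_i(m) := \frac{A_i(\x, x', [x-x']^m)}{\x_i - x_i'}.
\end{equation*}

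I then close the proof by reverse induction on $\max_k m_k$ at fixed $N := \|m\|_1$. The base case $\max_k m_k = N$ forces $m = N e_j$ for some $j$; since $F$ depends only on $x_j$, dummy and completeness give $A_j = (\x_j - x_j')^N = \IG_j$. For the inductive step, suppose the claim holds for every $m'$ with $\|m'\|_1 = N$ and $\max_k m'_k > h$, and let $m$ satisfy $\max_k m_k = h < N$. Pick $i^* \in \arg\max_k m_k$. For each $i \neq i^*$ with $m_i \geq 1$, apply the exchange identity with $j = i^*$: the shifted monomial has $(m - e_i + e_{i^*})_{i^*} = h + 1 > h$, so the inductive hypothesis gives $\phi_{i^*}(m - e_i + e_{i^*}) = \frac{h + 1}{N}[\x - x']^{m - e_i}$, from which $\phi_i(m) = \frac{m_i}{N}[\x - x']^{m - e_i} = \phi_i^{\IG}(m)$ follows. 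Completeness then pins down $\phi_{i^*}(m)$: subtracting the contributions just determined yields $(\x_{i^*} - x_{i^*}')\phi_{i^*}(m) = [\x - x']^m - \frac{N - h}{N}[\x - x']^m = \frac{h}{N}[\x - x']^m$, whence $\phi_{i^*}(m) = \phi_{i^*}^{\IG}(m)$.

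The main obstacle is selecting a sound induction variable: the exchange identity preserves $\|m\|_1$ and does not decrease $|\text{supp}(m)|$ in general, so neither quantity serves alone. Reverse induction on $\max_k m_k$ is what makes the argument close, because transferring a unit of exponent to the current argmax strictly raises the maximum, driving the argument toward the pure-power base case $m = N e_j$.
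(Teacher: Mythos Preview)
Your proof is correct and shares the paper's core ingredients: the same exchange identity $\phi_i(m) = \frac{m_i}{m_j+1}\phi_j(m-e_i+e_j)$ extracted from symmetric monotonicity, the same base case (pure powers $m = Ne_j$ handled by dummy and completeness), and completeness to pin down the last coordinate. The difference is purely in the induction bookkeeping. The paper argues by contradiction using lexicographic order on $m$: it takes the lex-least counterexample $m^*$, picks the smallest index $i$ where $A_i\neq\IG_i$, and transfers a unit of exponent from $i$ to the fixed last coordinate $n$, producing $m^{**}<_{\text{lex}} m^*$. You instead fix $\|m\|_1$ and run reverse induction on $\max_k m_k$, always transferring to the current argmax. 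Your scheme is a bit cleaner: the transfer target is intrinsic rather than depending on the labeling of coordinates, and you dispose of the degenerate cases ($\x_k = x_k'$ for some $k\in\text{supp}(m)$) explicitly up front, whereas the paper's writeup is less careful about ensuring $\x_n\neq x_n'$ when invoking symmetric monotonicity with $j=n$. Either organization works once the exchange identity is in hand.
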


\begin{proof}
Let $A\in \A^1$ satisfy completeness, dummy, linearity, and symmetric monotonicity. Fix $\x$, $x'$. It is useful to note that $\I_i(\x,x',[x-x']^m) = \frac{m_i}{\|m\|_1}[\x-x']^m$. We proceed by lexicographic induction on $m\in\N_0^n$. What we mean by $m'<_{\text{lex}} m$ is that $m_i'=m_i$ for $1\leq i <k$, but $m_k'<m_k$.

Let $M\subseteq \N_0^n$ be the set of values of $m$ for which $A(\x,x',[x-x']^m) = \I(\x,x',[x-x']^m) = \frac{1}{\|m\|_1}(m_1,...,m_n)[\x-x']^m$. Now, $A(\x,x',[x-x']^0) = 0 = \I(\x,x',[x-x']^0)$ by dummy, so $(0,...,0) \in M$. Suppose instead that $\|m\|_0 = 1$, so that only $m_i \neq 0$. By dummy, $A_j(\x,x',[x-x']^m) = 0$ for $j\neq i$, and by completeness, $A_i(\x,x',[x-x']^m)=[\x-x']^m$. Thus $A(\x,x',[x-x']^m) = \I(\x,x',[x-x']^m)$, and $\|m\|_0=1$ implies $m \in M$.

Suppose there exists some element in $\N_0^n$ that is not an element in $M$. Let $m^*$ be the smallest such element. Define $S =\{1\leq i\leq n : A_i(\x,x',[x-x']^{m^*}) \neq \I_i(\x,x',[x-x']^{m^*})\}$. By the above, we have that $\|m^*\|_0 \geq 2$. Note that if $i\in S$ then it must be that 1) $\x_i\neq x_i'$, for otherwise $A_i = 0 = \IG_i$, and 2) $m_i^* > 0$.

Choose $i$ to be the least element in $S$. $A$ and $\IG$ must disagree in two or more components, for if they disagreed in exactly one component, then they could not both satisfy completeness. Thus $i<n$. Define $F(x) = [x-x']^{m^*}$ and define,

$$G(x) = \frac{m^*_i}{m_n^*+1} (x_1-x_1')^{m_1^*}\dotsb (x_i-x_i')^{m_i^*-1} \dotsb (x_n-x_n')^{m_n^*+1} $$

Note $\frac{\partial F}{\partial x_i} = \frac{\partial G}{\partial x_n}$. Thus, we have by symmetric monotonicity:

$$\frac{A_i(\x,x',F)}{\x_i-x_i'} = \frac{A_n(\x,x',G)}{\x_n-x_n'}$$

Also note that $m^{**} = (m_1,...,m_i-1,...,m_n+1) < m^*$. Thus $m^{**}\notin M$, $A(\x,x',G) = \I(\x,x',G)$. We then have,

\begin{align*}
    \frac{A_i(\x,x',F)}{\x_i-x_i'} &= \frac{A_n(\x,x',G)}{\x_n-x_n'}\\
    &= \frac{\I_n(\x,x',G)}{\x_n-x_n'}\\
    &= \frac{m_i^*}{\|m\|_0} (x_1-x_1')^{m_1^*}\dotsb (x_i-x_i')^{m_i^*-1} \dotsb (x_n-x_n')^{m_n^*}\\
    &= \frac{m_i^*}{\|m\|_0} \frac{(x_1-x_1')^{m_1^*}\dotsb (x_i-x_i')^{m_i^*} \dotsb (x_n-x_n')^{m_n^*}}{\x_i-x_i'}\\
    &= \frac{\I_i(\x,x',F)}{\x_i-x_i'}
\end{align*}

This shows that $A_i(\x,x',F) = \I_i(\x,x',F)$ for $i<n$. By completeness, we have $A_n(\x,x',F) = \I_n(\x,x',F)$. Thus $m^*\in M$, a contradiction. Thus there is no element of $\N_0^n$ that is not an element of $M$, and $M = \N_0^n$ concluding the proof.
\end{proof}

\noindent We now move to the main proof:
\vspace{2mm}

    Let $A\in \A^1$ satisfy completeness, dummy, linearity, and symmetric monotonicity and let $F\in \F^1$. For any $i$ such that $1\leq i \leq n$, $\frac{\partial F}{\partial x_i}$ is analytic and by the Stone Weierstrass theorem, for any $\epsilon > 0$, there exists a polynomial, $p$, such that $|p(x)-\pxi(x)| < \epsilon$ on $[a,b]$. Let $p_m$ be a polynomial such that $|p_m(x)-\pxi(x)| < \frac{1}{2m}$, and let $P_m$ be any polynomial so that $\frac{\partial P_m}{\partial x_i} = p_m$. Note that $\frac{\partial (P_m-\frac{x_i}{m})}{\partial x_i} = p_m - \frac{1}{m}< \pxi$.

    Now assume that $\x_i-x_i'\geq 0$. By symmetric monotonicity we have $A_i(\x,x',P_m-\frac{x_i}{m})\leq A_i(\x,x',F)$. Employing the dominated convergence theorem, we have:

    \begin{align*}
        A_i(\x,x',F) &\geq \lim_{m\rightarrow\infty} A_i(\x,x',P_m-\frac{x_i}{m})\\
        &= \lim_{m\rightarrow\infty} \I_i(\x,x',P_m-\frac{x_i}{m}) \\
        &= \lim_{m\rightarrow\infty} (\x_i-x_i') \int_0^1 \frac{\partial (P_m-\frac{x_i}{m})}{\partial x_i}(\g(t))dt\\
        &= \lim_{m\rightarrow\infty} (\x_i-x_i') \int_0^1 p_m(\g(t))dt - \frac{(\x_i-x_i')}{m}\\
        &= (\x_i-x_i') \int_0^1 \pxi(\g(t))dt\\
        &= \I_i(\x,x',F)
    \end{align*}
    By considering $P_m+\frac{x_i}{m}$, we gain the opposite inequality, namely, $A_i(\x,x',F)\leq \I_i(\x,x',F)$. This establishes that $A_i(\x,x',F) = \I_i(\x,x',F)$.

    The case where $\x_i-x_i'\leq 0$ follows a parallel proof.
\end{proof}

\section{Proof of Theorem~\ref{Theorem: symmetric monotonicity for A2}}\label{appendix: symmetric monotonicity for A2}

\begin{proof}
    (iii. $\Rightarrow$ ii.) Suppose $A\in \A^2(D_\IG)$ is the IG method and $(\x,x',F)\in D_\IG$. It is well known that  IG satisfies completeness, dummy, and linearity. If $\x_i = x_i'$, then it is easy to see that $\IG_i(\x,x',F) = 0$.
    
    Suppose that $(\x,x',G)\in D_\IG$ as well, and that $\x_i \neq x_i'$, $\x_j \neq x_j'$. Furthermore, suppose that $\pxi \leq \pxj$ locally approximately. Because $(\x,x',F)$, $(\x,x',F)\in D_\IG$, $\pxi$ and $ \pxj$ can be integrated along the path $\g(t)=x' +t(\x-x')$, implying that the measure of points on the path where $\pxi$ and $ \pxj$ exist has full measure with respect to the Lebesgue measure on $\R$. Suppose $x$ is one such point. Then $\lim_{z\rightarrow \infty}\frac{F(x_1,...,x_i+z,...,x_n) - F(x)}{z}$, $ \lim_{z\rightarrow \infty} \frac{G(x_1,...,x_j+z,...,x_n) - G(x)}{z}$ both exist and, because $\pxi \leq \pxj$ locally approximately, $\pxi(x) = \lim_{z\rightarrow \infty}\frac{F(x_1,...,x_i+z,...,x_n) - F(x)}{z} \leq \lim_{z\rightarrow \infty} \frac{G(x_1,...,x_j+z,...,x_n) - G(x)}{z} = \pxj(x)$. Thus,

    \begin{equation*}
        \begin{split}
            \frac{\IG_i(\x,x',F)}{\x_i-x_i'} &= \int_0^1 \pxi(x'+t(\x-x'))dt\\
            &\leq \int_0^1 \pxj(x'+t(\x-x'))dt\\
            &= \frac{\IG_j(\x,x',F)}{\x_j-x_j'}
        \end{split}
    \end{equation*}
    and IG satisfies $\mathcal{C}^0$-symmetric monotonicity.
    
    \vspace{2mm}

    ii. $\Rightarrow$ i.) Suppose $A\in \A^2(D_\IG)$ satisfies completeness, dummy, linearity, and $\mathcal{C}^0$-symmetric monotonicity. $A$ satisfies symmetric monotonicity for $F\in\F^1$ immediately by the definition of partial derivatives. Suppose that $F$ is non-decreasing from $x'$ to $\x$ and let $(\x,x',F)\in D_\IG$. If $\x_i = x_i'$, then $A_i(\x,x',F) = 0$. Suppose $\x_i> x_i'$. As previously observed, $\pxi$ exists almost everywhere on the straight path $\g(t)$. Setting $G\equiv 0$, then $0 = \frac{\partial G}{\partial x_i} \leq \pxi$ almost approximately since $F$ is non-decreasing from $x'$ to $\x$ and $\x_i> x_i'$. Thus $0 = \frac{A_i(\x,x',G)}{\x_i-x_i'} \leq \frac{A_i(\x,x',F)}{\x_i-x_i'}$, and $0 \leq A_i(\x,x',F)$. If instead we assume that $\x_i<x_i'$, then $0 = \frac{\partial G}{\partial x_i} \geq \pxi$, and $0 = \frac{\partial G}{\partial x_i} \leq -\pxi$. Thus $0 = \frac{A_i(\x,x',G)}{\x_i-x_i'} \leq \frac{A_i(\x,x',-F)}{\x_i-x_i'}$, and $0 \leq A_i(\x,x',F)$. Thus, in any case, $A_i(\x,x',F)\geq 0$ for all $i$, and $A$ satisfies NDP.

    \vspace{2mm}

    i. $\Rightarrow$ iii.) Suppose $A\in\A^2(D_\IG)$ satisfies completeness, dummy, linearity, and NDP. Let $(\x,x',F)\in D_\IG$ and choose a component $i$. By methods found in the proof of \cite[Theorem 2]{lundstrom2022rigorous}, there exists a sequence of functions $F_m$ such that:

     \begin{itemize}
         \item $F_m$ is analytic for all $m$.
         \item $\frac{\partial F_m}{\partial x_i} \leq \frac{\partial F}{\partial x_i}$ where $\frac{\partial F}{\partial x_i}$ exists.
         \item $\lim_{m\rightarrow \infty} \frac{\partial F_m}{\partial x_i} = \frac{\partial F}{\partial x_i}$ where $\frac{\partial F}{\partial x_i}$ exists.
         \item $|\frac{\partial F_m}{\partial x_i}|\leq k$ for all $m$.
         \item $F-F_m$ is non-decreasing from $x'$ to $\x$ in $i$.
     \end{itemize}
     
    By NDP we have $A_i(\x,x',F-F_m)\geq 0$ and $A_i(\x,x',F) \geq A_i(\x,x',F_m)$. Since $F_m\in \A^1$, we have $A_i(\x,x',F_m) = \IG_i(\x,x',F)$ by Theorem~\ref{Theorem: symmetric monotonicity for A1}. Recalling that $\pxi$ exists almost everywhere on IG's path, we employ the dominated convergence theorem to gain:

    \begin{align*}
        A_i(\x,x',F) &\geq \lim_{m\rightarrow\infty} A_i(\x,x',F_m)\\
        &= \lim_{m\rightarrow\infty} \I_i(\x,x',F_m) \\
        &= \lim_{m\rightarrow\infty} (\x_i-x_i') \int_0^1 \frac{\partial F_m}{\partial x_i}(\g(t))dt\\
        &= (\x_i-x_i') \int_0^1 \pxi(\g(t))dt\\
        &= \I_i(\x,x',F)
    \end{align*}
    By a parallel method we can gain $A_i(\x,x',F) \leq \IG_i(\x,x',F)$.

\end{proof}

\section{Softplus Approximations Converge Uniformly}\label{appendix: softplus approximation uniformly converges}

Define $S^k_\a$ to be as $S^k$, but replace each ReLU function $s$ in $S^k$ with the parameterized softplus, $s_\a$. Then the softplus approximation of $F$ is given by:

$$F_\a(x) = S_\a^m \circ F^m \circ S_\a^{m-1} \circ F^{m-1} \circ ... \circ S_\a^2 \circ F^2 \circ S_\a^1 \circ F^1 (x)$$

\begin{lemma}
$F_\a \rightarrow F$ uniformly on $U$.
\end{lemma}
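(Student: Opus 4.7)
The plan is to prove $F_\alpha \to F$ uniformly on $[a,b]$ (which implies uniform convergence on any subset $U \subseteq [a,b]$) by layer-wise induction, combining two ingredients: a global uniform bound on $|s_\alpha(z) - \max(0,z)|$, and the uniform continuity of each analytic layer on compact sets.

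The key starting estimate is
\[
\sup_{z \in \mathbb{R}} |s_\alpha(z) - \max(0,z)| \;\le\; \frac{\ln 2}{\alpha},
\]
which follows by direct calculation: for $z \ge 0$, $s_\alpha(z) - z = \alpha^{-1}\ln(1+e^{-\alpha z}) \in [0,(\ln 2)/\alpha]$, and for $z < 0$, $s_\alpha(z) = \alpha^{-1}\ln(1+e^{\alpha z}) \in (0,(\ln 2)/\alpha]$. Since each $S^k_\alpha$ replaces ReLU by softplus componentwise, and identity components contribute zero, this immediately lifts to the layer-level bound $\|S^k_\alpha(y) - S^k(y)\|_\infty \le (\ln 2)/\alpha$ uniformly in $y \in \mathbb{R}^{n_k}$.

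Next I would set up induction on the layer index $k$. Let $y^k(x) := (S^k \circ F^k \circ \cdots \circ S^1 \circ F^1)(x)$ and define $y^k_\alpha(x)$ analogously, with $S^j_\alpha$ in place of $S^j$. The inductive claim is: there exist a compact set $\tilde K^k \supset y^k([a,b])$ and $\alpha_k > 0$ such that for every $\alpha \ge \alpha_k$ we have $y^k_\alpha([a,b]) \subseteq \tilde K^k$, and $\sup_{x \in [a,b]} \|y^k_\alpha(x) - y^k(x)\|_\infty \to 0$ as $\alpha \to \infty$. The base case $k=0$ is trivial. For the inductive step, the triangle inequality gives
\[
\|y^{k+1}_\alpha(x) - y^{k+1}(x)\|_\infty \;\le\; \|S^{k+1}_\alpha(w_\alpha) - S^{k+1}(w_\alpha)\|_\infty + \|S^{k+1}(w_\alpha) - S^{k+1}(w)\|_\infty,
\]
where $w := F^{k+1}(y^k(x))$ and $w_\alpha := F^{k+1}(y^k_\alpha(x))$. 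The first summand is bounded by $(\ln 2)/\alpha$ via the softplus estimate, uniformly in $x$. For the second, $F^{k+1}$ is analytic hence uniformly continuous on $\tilde K^k$, and $S^{k+1}$ is globally $1$-Lipschitz in the sup norm (since ReLU is), so $S^{k+1} \circ F^{k+1}$ is uniformly continuous on $\tilde K^k$; by the inductive hypothesis $\sup_x \|y^k_\alpha(x) - y^k(x)\|_\infty \to 0$, whence the second term also vanishes uniformly in $x$. Taking $\tilde K^{k+1}$ to be any compact neighborhood (e.g., closed $1$-ball) of $y^{k+1}([a,b])$ absorbs $y^{k+1}_\alpha([a,b])$ once $\alpha$ is sufficiently large, closing the induction.

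The main obstacle is the circular-looking coupling between the two estimates: uniform continuity requires a fixed compact domain, while the perturbed iterates $y^k_\alpha$ stay inside such a domain only because their deviation from the unperturbed iterates is already controlled. The global nature of the softplus bound, holding on all of $\mathbb{R}^{n_k}$ rather than merely on some a priori compact set, is precisely what breaks this circle and lets closeness propagate through all $m$ layers. After $m$ iterations, the induction yields $\sup_{x \in [a,b]} |F_\alpha(x) - F(x)| \to 0$, which is the claimed uniform convergence.
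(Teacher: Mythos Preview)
Your proof is correct and follows essentially the same layer-wise induction as the paper: both arguments use the uniform bound $\sup_z |s_\alpha(z)-\text{ReLU}(z)|\to 0$, the $1$-Lipschitz property of the activation layers, and the Lipschitz/uniform continuity of the analytic layers $F^k$ on compact sets to push the estimate through each layer. Your explicit $(\ln 2)/\alpha$ bound and your particular triangle-inequality split (inserting $S^{k+1}(w_\alpha)$ rather than the paper's $S^{k+1}_\alpha(w)$) are minor variations that do not change the structure of the argument.
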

\begin{proof}
Begin proof by induction. For $k=1$, it is easy to show that $s_\a\rightarrow s$ uniformly on $\R$, and thus, $S_\a^1\rightarrow S^1$ uniformly on $\R^n$. Thus, for any $\epsilon > 0$, an $A>0$ may be chosen such that for any $y\in \R^n$, $\a>A$ implies $\|S^1_\a(y) - S^1(y)\| < \epsilon$. Replace $y$ with $F^1(x)$ to get $S^1_\a (F^1) \rightarrow S^1(F^1)$ uniformly.

Write $ G^k := S^k \circ F^k \circ ... \circ S^1 \circ F^1 (x)$ and $G^k_\a := S_\a^k \circ F^k \circ ... \circ S_\a^1 \circ F^1 (x)$, and suppose $G_\a \rightarrow G$ uniformly. It remains to be shown that $S_\a^k \circ F^k \circ G^k_\a \rightarrow S^k \circ F^k \circ G^k$ uniformly.

\begin{align*}
    &\|S_\a^k ( F^k ( G^k_\a(x))) - S^k ( F^k ( G^k(x)))\|\\
    \leq &\|S_\a^k ( F^k ( G^k_\a(x))) - S_\a^k ( F^k ( G^k(x)))\| + \|S_\a^k ( F^k ( G^k(x))) - S^k ( F^k ( G^k(x)))\|\\
    \leq  & \|F^k ( G^k_\a(x)) - F^k ( G^k(x))\| + \|S_\a^k ( F^k ( G^k(x))) - S^k ( F^k ( G^k(x)))\|\\
\end{align*}
Where the third line is because $S^k_\a$ is Lipschitz with Lipschitz constant $\leq 1$.

Since $G^k$ is analytic, it is bounded on $U$. Since $G^k_\a$ converges uniformly to $G^k$, it is bounded for large enough $\a$. Let $\a_0$ produce this bound, that is, if $\a > \a_0$, then $\max(\|G^k_\a(x)\|, \|G^k(x)\|) \leq C_1$ for any $x\in U$. Since $F$ is analytic, it is Lipshitz on bounded domains. Thus, if $\a > \a_0$, then

$$\|F^k ( G^k_\a(x)) - F^k ( G^k(x))\| \leq  C_2\|G^k_\a(x) - G^k(x)\|$$

Now, by uniform continuity of $G_\a^k$ and $S_\a^k$, choose $\a_1$ so that $\a > \alpha_1$ guarantees that $\|G^k_\a(x) - G^k(x)\| < \epsilon /2C_2$, and choose $\a_2$ so that $\a > \alpha_2$ guarantees that $\|S_\a^k ( F^k ( G^k(x))) - S^k ( F^k ( G^k(x)))\| < \epsilon /2$. Then $\a>\max(\a_0,\a_1,\a_2)$ guarantees that
\begin{align*}
    &\|S_\a^k ( F^k ( G^k_\a(x))) - S^k ( F^k ( G^k(x)))\|\\
    \leq  & \|F^k ( G^k_\a(x)) - F^k ( G^k(x))\| + \|S_\a^k ( F^k ( G^k(x))) - S^k ( F^k ( G^k(x)))\|\\
    \leq &  C_2\|G^k_\a(x) - G^k(x)\| + \|S_\a^k ( F^k ( G^k(x))) - S^k ( F^k ( G^k(x)))\|\\
    < & \epsilon/2 + \epsilon/2 =\epsilon
\end{align*}
showing that $S_\a^k \circ F^k \circ G^k_\a \rightarrow S^k \circ F^k \circ G^k$ uniformly.
\end{proof}

\section{Proof of Theorem~\ref{theorem: softplus approximation topology}}\label{appendix: softplus approximation topology}
\subsection{Setup}

Define $S^k_\a$ to be as $S^k$, but replace each ReLU function $s$ in $S^k$ with the parameterized softplus, $s_\a$. Then the softplus approximation of $F$ is given by:

$$F_\a(x) = S_\a^m \circ F^m \circ S_\a^{m-1} \circ F^{m-1} \circ ... \circ S_\a^2 \circ F^2 \circ S_\a^1 \circ F^1 (x)$$

Also, for a funciton $G:\R^n\rightarrow \R^m$, define $DF$ to be the Jacobian, so that if $F_i$ is the $i^\text{th}$ output of $F$, then $(DG)_{i,j} = \frac{\partial G_i}{\partial x_j}$.

\subsection{Main Proof}
First, we state an outline of the proof. We proceed by induction. In the non-trivial case with one-dimensional output, $F^1$ is not the zero function and $S^1$ is ReLU. In this case, $\{y\in U: F^1(y)\neq 0\}$ is open and has full measure. For any $x$ in this set, we can compose $F^1$ with ReLU and get that $S^1\circ F^1$ behaves like $F^1$ or the zero function locally. For each $x$ in this set, $D (S^1_\a \circ F^1)$ converges locally to $DF^1$ or $0$ locally. In the multivariate case, each $(S\circ F^1)_i$ has a set with desired behaviors, so for any $x$ in the intersection of such sets, $S\circ F^1$ has the desired behaviors. That set is open and has full measure.

For the induction step, we assume that $G^k$ has the desired properties and want to show $S^{k+1}\circ F^{k+1}\circ G^k$ does as well. If $G^k$ is equivalent to an analytic function in some neighborhood, so is $F^{k+1}\circ G^k$. An argument similar to the $k=1$ step shows that for almost every $x$ in our neighborhood, $S^{k+1}\circ F^{k+1}\circ G^k$ is equivalent to an analytic function in some new open neighborhood containing $x$, and $S_\a^{k+1}\circ F^{k+1}\circ G_\a^k$ converges. We then consider a collection of points $x \in U$ with the desirable properties, and a collection of open sets $N_x$ containing them, where $S^{k+1}\circ F^{k+1}\circ G^k$ is locally equivalent to an analytic function on $N_x$. We show that $\cup_x N_x$ is open and has full measure.

\begin{proof}
Let $F\in\F^1$. As before, write $ G^k := S^k \circ F^k \circ ... \circ S^1 \circ F^1$ and $G^k_\a := S_\a^k \circ F^k \circ ... \circ S_\a^1 \circ F^1$. Assume that there exists $U^*\subset U$ with same measure as $U$, and that $x\in U^*$ implies that exists an open region containing $x$, $B_x$, such that: 1) $G^k\equiv H_x$ on $B_x$, where $H_x$ is a real-analytic function on $U$, 2) $DG^k(x)$ exists, and 3) $DG^k_\a(x)\rightarrow DG^k(x)$ as $\a\rightarrow \infty$. We want to show that there is a set analogous to $U^*$ for $S^{k+1} \circ F^{k+1} \circ G^k$ and $S_\a^{k+1} \circ F^{k+1} \circ G_\a^k$. With this established, we will have gained a proof by induction. To explain, the above is the $k\rightarrow k+1$ step. By setting $k=1$, and setting $F^1$, $S^1$ as the identity mappings, we will prove the $k=1$ step, concluding the proof.

First, let us consider the case where $F^{k+1}$, $S^{k+1}$ output in one dimension. Let $x\in U^*$, and suppose $G^k\equiv H_x$ on $B_x$. Then $F^{k+1} \circ G^k$ is analytic on $B_x$, since compositions of real analytic function are real analytic. 

\underline{Case 1}: Consider the case where $\lambda(\{y\in B_x:G^k(y) = 0)\}>0$. Then  $G^k\equiv 0$ and $S^{k+1} \circ F^{k+1} \circ G^k $ is constant on $B_x$. In this case, the derivative of $S^{k+1} \circ F^{k+1} \circ G^k$ exists everywhere on $B_x$, and is equal to zero. Now, for $y\in B_x$, we have 
\begin{align*}
    \lim_{\a\rightarrow \infty} \nabla (S_\a^{k+1} \circ F^{k+1} \circ G_\a^k)(y) & = \lim_{\a\rightarrow \infty} \sum_{j=1}^{n_k}\frac{d S_\a^{k+1}}{d (F^{k+1}\circ G^k)}(F^{k+1}(G^k_\a(y))) \times \frac{\partial F^{k+1}}{\partial G_{\a,j}^k}(G^k_\a(y)) \times \nabla G^k_{\a,j}(y)\\
    &= 0\\
    &= \nabla (S^{k+1} \circ F^{k+1} \circ G^k)(y)
\end{align*}
where the $0$ comes from the fact that $|\frac{d S_\a^{k+1}}{d (F^{k+1}\circ G^k)}| \leq 1$, $\frac{\partial F^{k+1}}{\partial G_{\a,j}^k}$ is bounded for a bounded domain (which it is), and $\nabla G^k_{\a,j}(y)\rightarrow 0$ for each $j$. Thus $S^{k+1} \circ F^{k+1} \circ G^k$, $S_\a^{k+1} \circ F^{k+1} \circ G_\a^k$ have properties 1-3 of the theorem on the set $B_x$.

\underline{Case 2}: Consider instead the case where $G^k$ is not the zero function, but $S^k$ is the identity mapping. Then $F^{k+1}\circ G^k$ is analytic on $B_x$ and so is $S^{k+1} \circ F^{k+1}\circ G^k$, and the derivative exists on $B_x$. Now, for $y\in B_x$, we have
\begin{equation}\label{F*G converge}
\begin{split}
    \lim_{\a\rightarrow \infty} \nabla (S_\a^{k+1} \circ F^{k+1} \circ G_\a^k)(y) & = \lim_{\a\rightarrow \infty} \nabla (F^{k+1} \circ G_\a^k)(y)\\
    &= \lim_{\a\rightarrow \infty} \sum_{j=1}^{n_k} \frac{\partial F^{k+1}}{\partial G_{\a,j}^k}(G^k_\a(y)) \times \nabla G^k_{\a,j}(y)\\
    &= \lim_{\a\rightarrow \infty} \sum_{j=1}^{n_k} \frac{\partial F^{k+1}}{\partial G_j^k}(G^k_\a(y)) \times \nabla G^k_{\a,j}(y)\\
    &= \sum_{j=1}^{n_k} \frac{\partial F^{k+1}}{\partial G_j^k}(G^k(y)) \times \nabla G^k_j(y)\\
    &= \nabla (F^{k+1} \circ G^k)(y)
\end{split}
\end{equation}
To explain the fourth line, $\nabla G^k_{\a,j}(y)$ converges pointwise by assumption. Also, $\frac{\partial F^{k+1}}{\partial G_j^k}$ is Lipschitz continuous in a bounded domain and $G^k_\a(y)$ converges uniformly. Thus each term converges pointwise. Thus $S^{k+1} \circ F^{k+1} \circ G^k$, $S_\a^{k+1} \circ F^{k+1} \circ G_\a^k$ have properties 1-3 of the theorem on the set $B_x$.

\underline{Case 3}: Consider the case where $G^k$ is not the zero function and $S^{k+1}$ is the ReLU function. Then $F^{k+1}\circ G^k$ is analytic and either the zero function or not on $B_x$.

\underline{Case 3.1}: Consider the subcase where $F^{k+1}\circ G^k\equiv 0$ on $B_x$. Then $S^{k+1}\circ F^{k+1}\circ G^k\equiv 0$ on $B_x$, is differentiable on $B_x$, and the derivative is the zero function. Then for $y\in B_x$, we have 
\begin{align*}
    \lim_{\a\rightarrow \infty} \nabla (S_\a^{k+1} \circ F^{k+1} \circ G_\a^k)(y) & = \lim_{\a\rightarrow \infty} \frac{d S_\a^{k+1}}{d (F^{k+1}\circ G_\a^k)}(F^{k+1}(G^k_\a(y))) \times \nabla (F^{k+1} \circ G^k_\a)(y)\\
    &= \lim_{\a\rightarrow \infty} \frac{d S_\a^{k+1}}{d (F^{k+1}\circ G_\a^k)}(F^{k+1}(G^k_\a(y))) \times \nabla (F^{k+1} \circ G^k)(y)\\
    &= \lim_{\a\rightarrow \infty} \frac{d S_\a^{k+1}}{d (F^{k+1}\circ G^k)}(F^{k+1}(G^k_\a(y))) \times 0\\
    &= \nabla (S^{k+1} \circ F^{k+1} \circ G^k)(y)
\end{align*}
where the third line is because $\frac{d S_\a^{k+1}}{d (F^{k+1}\circ G^k)}$ is bounded and $\nabla (F^{k+1} \circ G^k_\a)\rightarrow \nabla (F^{k+1} \circ G^k)$ on $B_x$ by Eq.~\eqref{F*G converge}. Thus in this subcase, $S^{k+1} \circ F^{k+1} \circ G^k$, $S_\a^{k+1} \circ F^{k+1} \circ G_\a^k$ have properties 1-3 of the theorem on the set $B_x$.

\underline{Case 3.2}: Instead consider the subcase where $F^{k+1}\circ G^k$ is a non-constant function on $B_x$. We have $\lambda(\{z\in B_x:F^{k+1}\circ G^k(z) = 0\}) = 0$.

\underline{Case 3.2.1}: Suppose $F^{k+1}\circ G^k(x)>0$. Because $F^{k+1}\circ G^K$ is continuous, there exists an open set $B_x'$ containing $x$ where $F^{k+1}\circ G^k>0$, and that on such a set, $S^{k+1}\circ F^{k+1}\circ G^k \equiv  F^{k+1}\circ G^k$. Then, 
\begin{align*}
    \lim_{\a\rightarrow \infty} \nabla (S_\a^{k+1} \circ F^{k+1} \circ G_\a^k)(y) & = \lim_{\a\rightarrow \infty} \frac{d S_\a^{k+1}}{d (F^{k+1}\circ G^k)}(F^{k+1}(G^k_\a(y))) \times \nabla (F^{k+1} \circ G^k_\a)(y)\\
    &= 1 \times \nabla (F^{k+1} \circ G^k)(y)\\
    &= \nabla (S^{k+1} \circ F^{k+1} \circ G^k)(y)
\end{align*}

\underline{Case 3.2.2}: Suppose $F^{k+1}\circ G^k(x)<0$. Because $F^{k+1}\circ G^K$ is continuous, there exists an open set $B_x'$ containing $x$ where $F^{k+1}\circ G^k<0$, and that on such a set,, $S^{k+1}\circ F^{k+1}\circ G^k \equiv  0$. Then, 
\begin{align*}
    \lim_{\a\rightarrow \infty} \nabla (S_\a^{k+1} \circ F^{k+1} \circ G_\a^k)(y) & = \lim_{\a\rightarrow \infty} \frac{d S_\a^{k+1}}{d (F^{k+1}\circ G^k)}(F^{k+1}(G^k_\a(y))) \times \nabla (F^{k+1} \circ G^k_\a)(y)\\
    &= 0 \times \nabla (F^{k+1} \circ G^k)(y)\\
    &= \nabla (S^{k+1} \circ F^{k+1} \circ G^k)(y)
\end{align*}
\underline{Case 3.2.3}: Suppose $F^{k+1}\circ G^k(x)=0$. In this case, we do not define a $B_x'$ set. We remind the reader that if $x\in U^*$ is a case 3.2.3 point, then $\lambda(\{z\in B_x:F^{k+1}\circ G^k(z) = 0\}) = 0$

Thus we have established in the one-dimensional output case that for each $x\in U^*$ that is not a case 3.2.3 point, there exists an open neighborhood containing $x$ where properties 1-3 hold.

Now consider the multivariate case. Define $K\subset U^*$ as the set of points in $U^*$ that are case 3.2.3 points for at least one output of $S^{k+1}\circ F^{k+1}\circ G^k$. Let $x\in U^*\setminus K$. Let $B_{x,i}'$ correspond to the open set containing $x$ where properties 1-3 hold when we only consider the output $(S^{k+1} \circ F^{k+1}\circ G^k)_i$. Then properties 1-3 hold on $\cap_i B_{x,i}'$ for each output of $(S^{k+1} \circ F^{k+1}\circ G^k)_i$ and $(S_\a^{k+1} \circ F^{k+1}\circ G_\a^k)_i$. Thus properties 1-3 hold for $S^{k+1} \circ F^{k+1}\circ G^k$ and $S_\a^{k+1} \circ F^{k+1}\circ G_\a^k$ on $\cap_i B_{x,i}^*$. Thus we have established in the multivariate case the following: for each $x\in U^*\setminus K$, there exists an open neighborhood containing $x$, $B_x'$, where properties 1-3 hold for $S^{k+1}\circ F^{k+1}\circ G^k$ and $S_\a^{k+1}\circ F^{k+1}\circ G_\a^k$.

We now move to show that $\lambda(K) = 0$, which will conclude the proof. Let $K_i$ denote the set of case 3.2.3 points for the $i^\text{th}$ output of $S^{k+1}\circ F^{k+1}\circ G^k$. Since $K=\cup_i K_i$, it suffices to show $\lambda(K_i)=0$. Let $x\in K_i$ for some $i$. Then $x$ is a case 3.2.3 point for the output of $(S^{k+1}\circ F^{k+1}\circ G^k)_i$. According to our assumption, there exists a $B_x$ containing $x$ where properties 1-3 hold for $G^k$, $G^k_\a$ on $B_x$. Note that $\cup_{x\in K_i} B_x$ is an open cover, and has a countable subcover $\cup_{j\in \N} B_{x_j}$, where each $x_j$ is a case 3.2.3 point for $(S^{k+1}F^{k+1}\circ G^k)_i$. Because $K_i\subseteq \cup_{x\in K_i} B_x$, we also have $K_i\subseteq \cup_{j\in \N} B_{x_j}$. Now,

\begin{align*}
    K_i &= K_i\cap (\cup_{j\in \N} B_{x_j})\\
    &= \cup_{j\in \N} (B_{x_j}\cap K_i)
\end{align*}
Now, if $x\in K_i$, then $(F^{k+1}\circ G^k)_i(x)=0$ by virtue of being a case 3.2.3 point. Also, it has been established that for case 3.2.3 points, $\lambda(\{z\in B_x:(F^{k+1}\circ G^k)_i(z) = 0\}) = 0$. Thus $\lambda(B_{x_j}\cap K_i)\leq \lambda(\{z\in B_{x_j}:(F^{k+1}\circ G^k)_i(z) = 0\}) = 0$. Thus, $K_i$ is a countable union of sets of measure zero, and is thus measure zero.
\end{proof}

\section{Proof of Corollary~\ref{corollary: convergence of softplus approximaiton}}\label{appendix: convergence of softplus approximaiton}
\begin{proof}
Let $F\in \F^2$, and let $U$ be the set as in Theorem~\ref{theorem: softplus approximation topology}. Let $\g(t)$ be the uniform speed path from $x'$ to $\x$ and suppose $\lambda(\{t\in [0,1]: \g(t)\in U\}) = 1$, where $m$ is the Lebesgue measure on $\R$. By Theorem~\ref{theorem: softplus approximation topology}, we have $\nabla F_\a(\g(t)) \rightarrow \nabla F(\g(t))$ for almost every $t$ in $[0,1]$. Suppose $\nabla F_\a$ is bounded on $U$ for large enough $\a$. Let $a_n$ be any sequence such that $a_n\rightarrow \infty$. Choose any index $i$, and by employing the dominated convergence theorem we gain:
\begin{align*}
    \lim_{n\rightarrow \infty} \I_i(\x,x',F_{a_n}) &= \lim_{n\rightarrow \infty} (\x_i-x_i')\int_0^1\frac{\partial F_{a_n}}{\partial x_i}(\g(t)) dt\\
    &= (\x_i-x_i')\int_0^1\frac{\partial F}{\partial x_i}(\g(t)) dt\\
    &= \I_i(\x,x',F)
\end{align*}
Since $\lim_{n\rightarrow \infty} \I_i(\x,x',F_{a_n}) = \I_i(\x,x',F)$ for any sequence $a_n$, we have $\lim_{\a\rightarrow \infty} \IG_i(\x,x',F_\a) = \IG_i(\x,x',F)$.

We now turn to show that $\nabla F_\a$ is bounded for large enough $\a$. Using the notation introduced in Theorem~\ref{theorem: softplus approximation topology}, note that:
$$\nabla F_\a = DS_\a^m \, DF^m\, DS_\a^{m-1}\, DF^{m-1}  ... DS_\a^2 \,DF^2 \, DS_\a^1 \, DF^1$$
Thus,
\begin{align*}
\|\nabla F_\a\|_\infty &\leq \Pi_{k=1}^m \|DS_\a^{k}(F^k \circ...\circ F^1)\|_\infty \times \|DF^{k}(S_\a^{k-1} \circ...\circ F^1)\|_\infty
\end{align*}
Now $\|DS_\a^{k}(F^k \circ...\circ F^1)\|_\infty\leq 1$ since $S_\a^k$ is either softplus or the identity mapping for each input. Also, $F^k$ is Lipshitz in a bounded domain, and $S_\a^{k-1} \circ...\circ F^1$ converges uniformly on $U$ to a function with a bounded range. Thus $\|DF^{k}(S_\a^{k-1} \circ...\circ F^1)\|_\infty$ is bounded on $U$, and $\|\nabla F_\a\|_\infty$ is bounded.
\end{proof}

\end{document}